\documentclass[11pt]{amsart}

\usepackage[utf8]{inputenc}
\usepackage{color}
\pdfoutput=1
\usepackage{geometry} 
\geometry{a4paper} 
\usepackage{graphicx,url}
\usepackage{amssymb,amsmath, amsthm}
\usepackage{epstopdf}
\usepackage{enumerate}
\usepackage{fancyhdr}
\usepackage{multirow}
\usepackage{tabularx}
\usepackage{subcaption}
\usepackage{longtable}
\usepackage{overpic}
\usepackage{algorithm, algorithmicx}
\fancyhead[LE]{\slshape \leftmark}

\newcommand{\re}{\mathbb{R}}

\newcommand{\cor}{\mathcal}

\usepackage{hyperref}
\usepackage{graphicx,url}


\newtheorem{theorem}{Theorem}
\newtheorem{proposition}{Proposition}
\newtheorem{lemma}{Lemma}
\newtheorem{remark}{Remark}[section]

\newtheorem{definition}{Definition}

\title{Pathwise optimization for bridge-type estimators and its applications}
\numberwithin{equation}{section}

\newcommand{\sgn}{\text{sgn}}
\newcommand{\de}{\mathrm{d}}

\usepackage{xcolor}


\allowdisplaybreaks
\date{\today}

\author{Alessandro De Gregorio}
\address{Department of Statistical Sciences, ``Sapienza" University of Rome,
	P.le Aldo Moro, 5 - 00185, Rome, Italy}
\email{alessandro.degregorio@uniroma1.it}

\author{Francesco Iafrate}
\address{Department of Mathematics, University of Hamburg,
Bundestr. 55, 20146 Hamburg, Germany}
\email{francesco.iafrate@uni-hamburg.de}
\begin{document}
	
\maketitle

\begin{abstract} 
Sparse parametric models are of great interest in statistical learning and are often analyzed by means of regularized estimators.
Pathwise methods allow to efficiently compute the full solution path for penalized estimators, 
for any possible value of the penalization parameter $\lambda$.  In this paper we deal with the pathwise optimization for  bridge-type problems; i.e. we are interested in the minimization of a loss function, such as negative log-likelihood or residual sum of squares, plus the sum of $\ell^q$ norms with $q\in(0,1]$ involving adpative coefficients. For some loss functions this regularization achieves asymptotically the oracle properties (such as the selection consistency). Nevertheless, since the objective function involves nonconvex and nondifferentiable terms, the minimization problem is computationally challenging.

The aim of this paper is to apply some   general algorithms, arising from nonconvex optimization theory, to compute efficiently the path solutions for the adaptive bridge estimator with multiple penalties. In particular, we take into account two different approaches: accelerated proximal gradient descent and blockwise alternating optimization.  The convergence and the path consistency of these algorithms  are discussed. In order to assess our methods, we apply these algorithms  to the penalized estimation of diffusion processes observed at discrete times. This latter represents a recent  research topic in the field of statistics for time-dependent data. 

\end{abstract}
	
{\it Keywords}:  adaptive thresholding operators, gradient descent, nonconvex optimization, path consistency, proximal maps, regularized estimators, stochastic differential equations

\section{Introduction}

It is well known that the optimization methods play a crucial role in machine learning, statistics and signal/image processing. Many practical problems  arising in statistical learning fall into the following class of problems
\begin{equation}\label{eq:ncproblem}
\min_{\theta} \left\{f(\theta):=   g(\theta) + \sum_{i=1}^m h_i(\theta)\right\},
\end{equation}
where $g:\mathbb R^{p}\to \mathbb R$ is a loss function, while the functions $h_i:\mathbb R^{p_i}\to \mathbb R,$ with $\sum_{i=1}^n p_i=p,$ represent nonsmooth and possibly nonconvex regularization terms.

 Sparse parametric models are of great interest in statistical learning and are often analyzed by means of regularized estimators.  A classical example is the linear regression model $y= {\bf X} \theta + \varepsilon,$ where $y$ is the response vector and ${\bf X}$ is the $n\times p$ predictor matrix, $\theta:=(\theta_1,...,\theta_p)^\top\in\mathbb R^p$ is the parametric vector and $\varepsilon$ is a centered sub-Gaussian random vector with independent components. In the high dimensional case, when $p > n$, the parameters can be estimated by means of regularized estimator, such as the LASSO, and many statistical guarantees can be obtained under the sparsity assumption, i.e. $\|\theta\|_0 = s < p$. 
 A more general regularized least squares problem is obtained 
 by solving the \eqref{eq:ncproblem} 
 with $g(\theta)=||y-{\bf X}\theta||^2$ and involving a $\ell^q$-penalty
$h(\theta)=\lambda||\theta||_q^q$. 
This leads to the bridge estimator $\hat \theta(\text{bridge})$, which is obtained 
 (see \cite{frank1993statistical}); 
 i.e.
 \begin{equation}\label{eq:bridge}
\min_{\theta}\left\{||y-{\bf X}\theta||^2+\lambda||\theta||_q^q\right\},
\end{equation}
where $||\theta||_q:=\left(\sum_{j=1}^p |\theta_j|^q\right)^{1/q}, q\in(0,1].$
\footnote{Even though  $||\theta||_q:=\left(\sum_{j=1}^p |\theta_j|^q\right)^{1/q}, q\in(0,1),$
is not a norm, we may still use this terminology as it is customary in literature.}  
The latter performs simultaneously both parameter estimation and variable selection. The $\ell^q$-penalties exhibit singularities (they are not differentiable); this allows for sparse estimation. For $q=1,$ the estimator \eqref{eq:bridge} becomes the Least Absolute Shrinkage and Selection Operator (LASSO) introduced in \cite{tib} and the optimization problem is convex.   Since for $q\in(0,1),$   The $\ell^q$ regularization term represents a bridge between the $\ell^0$-norm $\sum_{j=1}^p 1_{\theta_j\neq 0}$ and $\ell^1$ penalty. The reader can consult \cite{fu2000bridge},\cite{huang2008bridge} and \cite{group-bridge} for a discussion about the asymptotic properties of the bridge-type estimator. 
 \eqref{eq:bridge} represents a non-convex and non-smooth optimization problem and as such, despite its interesting theoretical properties, efficient computation of the path solutions for \eqref{eq:bridge} is a non-trivial matter.

Other types of nonconvex penalties have been dealt with in \cite{fanli2001} (Smoothly Clipped Absolute Deviation),  \cite{zhang2010} (Minimax Concave Penalty) and \cite{candes2008enhancing} (Log-Sum). In literature, some algorithms and methods have been proposed for nonconvex optimization problems such as local quadratic approximation (LQA) ( \cite{fanli2001}), minorization-maximization (MM) (\cite{hunter2005variable}), local linear approximation (LLA) (\cite{bridge-one-step}), and coordinate descent (\cite{breheny2011coordinate} and \cite{mazumder2011sparsenet}).

Although the non-convexity of the penalties leads to  overall non-convex optimization problems (and then computationally hard), numerous empirical studies have shown that gradient-based optimization methods, while only guaranteed to find local optima, often produce estimators having the oracle properties with consistently smaller estimation error than the LASSO estimators  (see, e.g., \cite{fanli2001} and \cite{bridge-one-step}).



A recent interesting application of the penalized estimation \eqref{eq:ncproblem} concerns the model selection for sparse stochastic processes; see, for instance, \cite{de2012adaptive}, \cite{masuda2017moment}, \cite{kinoshita2019penalized}, \cite{ciolek2020dantzig} and  \cite{ciolek2022lasso}, \cite{koike2020graphical}.
Therefore in this setting the regularization methods involve a loss function given, for instance, by the negative log-likelihood $\mathfrak L_n(\theta)$ (which  is generally not convex). 

\textbf{Mixed-rates asymptotics.}
There are several econometric models where the exact evaluation of the structural parameters requires estimators that exhibit \emph{mixed-rates asymptotics} (in \cite{antoine2012efficient}  some examples are discussed). 
This highlights the need for sparse parametric models  where different rates of convergence must be considered simultaneously. Then the problem becomes
\begin{equation}\label{eq:ncproblemsp}
\min_{\theta} \left\{\mathfrak L_n(\theta) +  \sum_{i=1}^m h_i(\theta)	\right\}.
\end{equation}
where each group of parameters has a different regularization term. 

A motivating example is given by discretely observed diffusion processes  $(X_t)_{t\geq 0}$ solution to the stochastic differential equation $$\de X_t =b(X_t,\alpha)\de t+ \sigma(X_t,\beta)\de W_t$$ with parametric vectors $\alpha\in\mathbb R^{p_1}$ and $\beta\in\mathbb R^{p_2}$ appearing in the drift and diffusion term, respectively. A suitable estimator of $\alpha$ and $\beta$ involves two different rates of convergence (see, e.g., \cite{yoshida2011polynomial}) and then the penalty term is split by grouping the parameters related to $b$ and $\sigma $.
In this framework, bridge and LASSO-type estimators have been studied in \cite{de2012adaptive}, \cite{suz} and \cite{di-regularized}, where $h_1(\alpha)$ and $h_2(\beta)$ are weighted $\ell^{q_i}$ penalties with  $q_i\in(0,1], i=1,2.$

By resorting to the same approach introduced in \cite{di-regularized}, it is possible to generalize the case of diffusion process to an arbitrary number of penalization terms. 
In order  to take into account the multiple asymptotic behavior of the non-regularized estimator, we suggest to  penalize different sets of parameters with different $\ell^q$ norms. 
Therefore, we are interested in to the estimator obtained as minimizer of the objective function \eqref{eq:ncproblemsp}, where $h_i$ represents an $\ell^{q_i}$ penalty.   We highlight again that the penalties as well as $\mathfrak L_n$ are not convex in general. 
The aim of this paper is to compute efficiently the solutions for the estimation/selection problems of type \eqref{eq:ncproblemsp} representing a non-convex optimization problem with non-differentiable penalties. We observe that  \eqref{eq:bridge}  falls into the class \eqref{eq:ncproblemsp}.

The classical convex optimization tools are not useful to compute efficiently  the estimator minimizing \eqref{eq:ncproblemsp}. For this reason, in the last decades, there has been an increasing interest for the nonconvex optimization problems and some algorithms has been proposed in literature (see, e.g., \cite{bolte2014proximal} and \cite{li2015accelerated}). By resorting to the recent  advances in the nonconvex optimization theory, we propose some algorithms for computing the pathwise solution (as the tuning coefficient varies) for a bridge-type estimators with multiple penalties arising from problem \eqref{eq:ncproblemsp}. Up to our knowledge,  this is the first attempt in the statistical literature to compute efficiently the pathwise solution  of a problem involving bridge-type constraints, without using convex relaxation of the penalties. Essentially, the present paper represents the  follow-up of \cite{di-regularized}, where we focus on the computational and algorithmic issues of the same problem. 

The paper is organized as follows. in Section \ref{sec:prox} we recall some notions from nonconvex optimization theory which are needed to deal with bridge-type estimators. Further details are given in the Appendix. In Section \ref{sec:bridge} we introduce the adaptive bridge-type estimators and the general statistical setting. The pathwise optimization algorithms are introduced in Section \ref{sec:opt}, where two methods are proposed: the first one is an accelerated gradient descent algorithm, while the second approach is based on the blockwise proximal alternating minimization. Furthermore, the convergence properties of the updates produced by the two methods are investigated.   
Section \ref{path} allows an analysis of the path consistency for each parameters. This property is close to the oracle features of the bridge estimators. Finally, Section \ref{sec:appglm}-\ref{sec:appsde} contain the applications of our methods. In particular, the algorithms are applied to a generalized linear models based on an exponential family of distributions (Section  \ref{sec:appglm}) and to an ergodic diffusion process (Section  \ref{sec:appsde}). Some simulations are performed and the results compared for different estimators.

\section{Preliminaries on $\ell^q$ regularizers}\label{sec:prox}

Let us assume that the function $ f $ is not differentiable and not necessarily convex. If  $ f = g + h $, where $ g $ is differentiable and $ h $ is not, it is possible to apply a proximal map method by minimizing the quadratic approximation to $g$ and leave $h$ alone. Define a \emph{proximal map } (in set-valued sense) as
\begin{equation}\label{eq:prox-ini}
	\text{prox}_{s, h}(x):= \arg \min_{u} \left \{ \frac 1{2s} \|x - u\|^2 + h (u)\right \}
\end{equation}
given $x\in\mathbb R^p$ and $s > 0 $. We observe that for $s>0$
$$\text{prox}_{s, h}(x)=\text{prox}_{1,s h}(x)$$
and $\text{prox}_{ h}(x):=\text{prox}_{1, h}(x)$. Then a \emph{proximal gradient descent algorithm} seeks a minimizer by performing
successive updates as follows  (see for more details on this method \cite{ boyd_vandenberghe_2004} and \cite{bertsekas2015convex})
\begin{align}\label{eq:prox-update}
\theta^{t} &\in\arg\min_\theta\left\{g(\theta^{t-1})+\nabla g(\theta^{t-1})^\top (\theta-\theta^{t-1})+\frac{1}{2s_t}||\theta-\theta^{t-1}||^2 + h(\theta) \right\}\\
&=\arg\min_\theta\left\{\frac{1}{2s_t}||\theta-(\theta^{t-1}-s_t \nabla g(\theta^{t-1}))||^2 + h(\theta) \right\}\notag\\
&= \text{prox}_{s_t, h}( \theta^{t-1} - s_t \nabla g(\theta^{t-1})).\notag
\end{align}
A crucial point is the choice of the stepsize: a common choice is to use a backtracking rule. 


In order to exploit algorithms based on the gradient descent in our framework, it is crucial to introduce a suitable proximal map \eqref{eq:prox-ini} with $h(\theta) = \lambda \sum_{i=1}^p w_i |\theta_i|^q, q\in(0,1],\lambda >0.$
For this reason, we deal with  the proximity operator
\begin{equation}\label{eq:prox}
\text{prox}_{\lambda||\cdot||_q^q}(z)=\arg \min_{\theta} \left \{ \frac 1{2} \|z - \theta\|^2 +\lambda ||\theta||_q^q\right \}=(\text{prox}_{\lambda|\cdot|^q}(z_1),\cdots, \text{prox}_{\lambda|\cdot|^q}(z_p))
\end{equation}
where  $z=(z_1,...,z_p)^\top\in \mathbb R^p.$
Therefore, it is sufficient to solve the one-dimensional optmization problem
\begin{equation}\label{eq:prox-lq}
	T^q_{ \lambda}	(z) :=\text{prox}_{\lambda|\cdot|^q}= \arg \min_{\theta \in \mathbb R} \left\{ \frac 12 (z - \theta )^2 + \lambda  |\theta|^{q} \right\},\quad z\in \mathbb R,
\end{equation}
where  $ T^q_{\lambda}$ denotes the adaptive thresholding map associated to the $ \ell^q $ metric.
It is possible to show that for $0<q<1$ (see Theorem 1 in \cite{lq-thresholding} and \cite{lqoptim}) the operator  $  T^q_{\lambda} $ can be expressed in the following form

\begin{equation}\label{eq:lq-threshold}
T^q_{\lambda}(z) = 
\begin{cases}
	0 & |z| < t_{q, \lambda} \\
	\{0, \sgn(z) \theta_{q, \lambda} \} & |z| = t_{q, \lambda} \\
	\sgn (z) \theta^*(z) & |z| > t_{q, \lambda}
\end{cases}
\end{equation}
where 
\begin{equation}\label{eq:tq-thresh-values}
	\theta_{q, \lambda} = [2 \lambda(1-q)]^{\frac{1}{2-q}}\,, \quad t_{q, \lambda} = \theta_{q, \lambda} + \lambda q \theta_{q, \lambda}^{q-1}
\end{equation}
and for $|z| > t_{q, \lambda},$
$ \theta^*(z) $ is the solution to 
\begin{equation}\label{eq:lq-root}
\theta  + \lambda q \theta^{q-1} = |z| \, , \quad \theta \in (\theta_{q, \lambda}, |z|).
\end{equation}
We have that  
\[\lim_{q\to 0^+}T^q_\lambda(z)=:
T^0_\lambda (z) = \begin{cases}
z & |z| > \sqrt {2\lambda} \\
\{0,z\}& |z|=\sqrt {2\lambda}\\
0 & |z| < \sqrt {2\lambda}
\end{cases}
\]
and 
\begin{equation}\label{eq:soft}
\lim_{q\to 1^-}T^q_\lambda(z)=:S_\lambda(z)= \sgn(z) (|z| - \lambda )_+ =
\begin{cases}
 \sgn(z) (|z| - \lambda ) & |z| > \lambda \\
 0 & \text{otherwise}
\end{cases}
\end{equation}	
representing the hard and soft-thresholding operators, respectively, used in best-subset selection and LASSO regression, respectively (see, e.g., \cite{hastiebook2015} and \cite{lq-thresholding}).

Let us note that there are two solutions at $ |z| = t_{q, \lambda}  $, but for practical purposes we will set $ T^q_{\lambda}(\pm t_{q, \lambda}) =0.$
In general it is not possible to solve \eqref{eq:lq-root} analytically and a numerical scheme is to be adopted, which can be done rapidly and allows a solution $\theta^*$ (see equation (5) in \cite{lq-thresholding}).

Besides the case $ q=1 $ (which recovers the soft-thresholding operator \eqref{eq:soft}), explicit expression for  $  T^q_{\lambda} $ are available only in the cases $ q=\frac 12  $ and $ q=2/3.$
Nevertheless, $ q=1/2 $ is considered to be superior in some applications
(see \cite{l12-regularization}). In particular for $ q= 1/2  $ one has
\begin{equation}\label{eq:l12-threshold}
T^{\frac 12}_{\lambda}(z) = 
\begin{cases}
0 & |z| \leq \frac 32 \lambda^{\frac 23}  \\
\frac 23 z \left(1 + \cos\left(\frac{2\pi}{3} - \frac 23 \arccos \left(\frac{\lambda}{4} \left(\frac{|z|}{3}\right)^{- \frac 32}\right)\right)\right) & |z| > \frac 32 \lambda^{\frac 23} 
\end{cases}
\end{equation}
with a slight modification w.r.t. to Theorem 1 in \cite{l12-regularization} to account for a factor of $ 1/2 $ in the objective function (see also \cite{lqoptim}).
Examples of the operators $ T^q_\lambda $ are shown in \autoref{fig:tq}. In particular in \autoref{fig:tq-family},
the operators corresponding to several different values of $ q $ are depicted. Notice how $ T^q_\lambda(z) $ 
it is not a continuous function of $ z $: the operators jump to zero, but the width of the jump is smaller than
the hard-thresholding operator $T^0_\lambda.$

Notice also that the jump points of $ T^q_\lambda $ do not change monotonically with $ q $.
In \autoref{fig:t13-soft} a comparison between $ T^\frac 13_\lambda $ and $ S_\lambda $ is shown.
Notice how for large values of $|z|$, $ T^q_\lambda(z)$ approaches the bisector, i.e. $ T^q_\lambda(z) \approx z, |z| >> t_{q, \lambda} $ (for $ q=1/2 $ this can be checked by a direct computation in \eqref{eq:l12-threshold}). This is a desirable behavior because values of the input away from zero remain unchanged. This corresponds to the fact that the $ \ell^q $ penalty, being concave, applies smaller penalization to large values of the parameters. The soft-thresholding operator instead systematically shifts the input to zero no matter its magnitude.
In a sense we can say that operators $ T^q_\lambda, 0< q < 1,$ interpolate between hard-thresholding 
and soft-thresholding. From the point of view of pathwise estimation this leads to discontinuous paths.

\begin{figure}
	\begin{subfigure}{0.7\textwidth}
		\centering
		\includegraphics[width=0.9\textwidth]{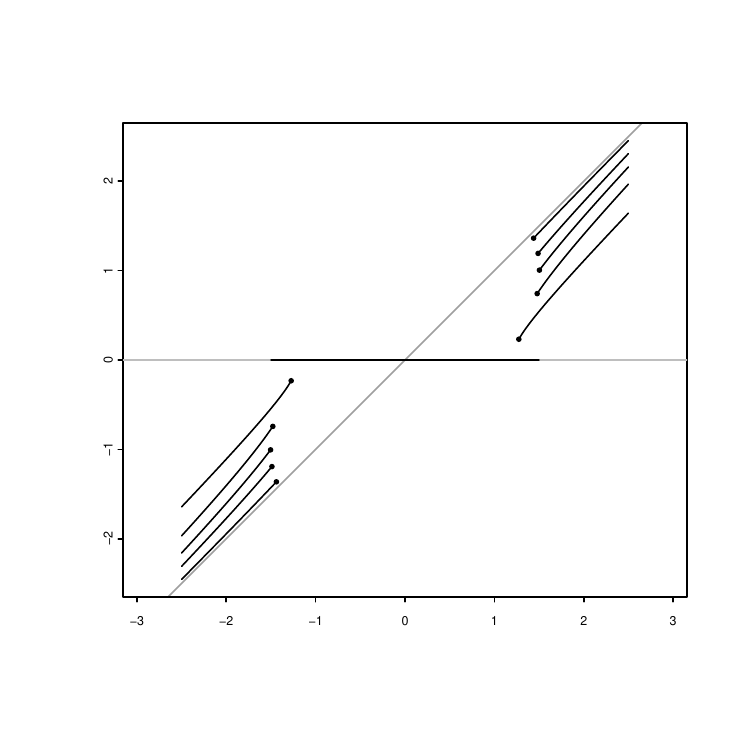}
		\caption{Family of $ T^q_1$ operators for $q=0.1, 1/3, 0.5, 2/3, 0.9$ from the ``furthest" to the $x$-axis to the ``closest" respectively. The bisector is shown in grey.}
		\label{fig:tq-family}
	\end{subfigure}
	\begin{subfigure}{0.7\textwidth}
		\centering
		\includegraphics[width=0.9\textwidth]{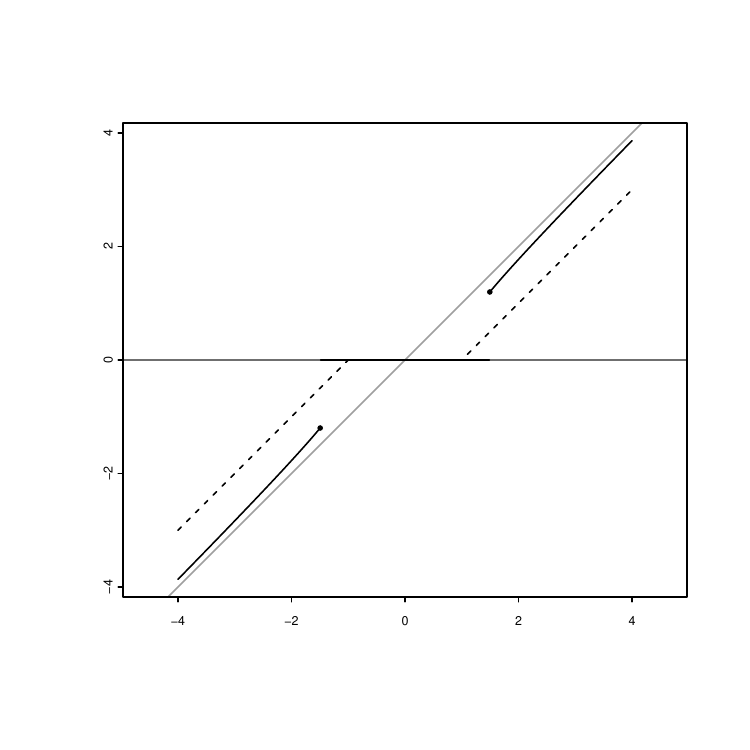}
		\caption{Comparison of $ T^{\frac 13}_1(z)$ (solid line) and the soft thresholding $ S_1(z)$ for larger values of $z$. The bisector is shown in grey.}
		\label{fig:t13-soft}
	\end{subfigure}%
	\caption{}
	\label{fig:tq}
\end{figure}

Let $ z = (z_1, \ldots, z_p)^\top \in \mathbb R^p,$  $ \mathbf w =  (w_1, \ldots, w_p)^\top \in \mathbb R_+^p $ and $ 0 <  q \leq  1.$ Let us denote with \begin{equation}\label{eq:hard-q}
\mathbf T_{\lambda \mathbf w}^{q}(z) := ( T_{\lambda  w_1}^{q}(z_1), \ldots, T_{\lambda  w_p}^{q}(z_p))^\top
\end{equation} 
the {\it adaptive $ q $-thresholding operator} which will play a crucial role in the optimization algorithms. For $q\to1^-,$ the operator \eqref{eq:hard-q} reduces to the component-wise adaptive soft thresholding operator as the vector map
$ \mathbf S_{\lambda\mathbf w} : \mathbb R^p \mapsto \mathbb R^p $ 
\begin{equation}\label{eq:soft-vec}
\mathbf S_{\lambda \mathbf w}(z) := (S_{\lambda w_1}(z_1), \ldots, S_{\lambda w_p}(z_p))^\top.
\end{equation}

Finally, by setting $h(\theta)=\lambda \sum_{i=1}^p w_i|\theta_i|^q, q\in (0,1],$ in the proximal map \eqref{eq:prox-update}, the above discussion allows to explicit the update as follows
\begin{equation}\label{eq:prox-st}
\theta^t={\bf T}_{\lambda s_t{\bf w}}^q(\theta^{t-1}-s_t \nabla g(\theta^{t-1})).
\end{equation}


\section{Bridge-type estimators with multiple penalties}\label{sec:bridge}

 Let us start recalling the shrinking estimators with multiple penalties introduced in \cite{di-regularized}. We deal with a parameter of interest $\theta:=(\theta^1,...,\theta^m)^\top,$ where $\theta^i:=(\theta_1^i,...,\theta_{p_i}^i)^\top, p_i\in\mathbb N,i=1,...,m.$ Furthermore, $\theta\in \Theta:=\Theta_1\times\cdots\times \Theta_m\subset \mathbb R^{\mathfrak{p}}, \mathfrak{p}:=\sum_{i=1}^mp_i,$ where $\Theta_i$ is a bounded convex subset of $\mathbb R^{p_i}.$ We denote by $\theta_0:=(\theta_0^1,...,\theta_0^m)^\top\in\text{int}(\Theta),$ where $\theta_0^i:=(\theta_{0,1}^i,...,\theta_{0,p_i}^i)^\top, i=1,...,m,$ stands for the true value of $\theta.$ Furthermore, we assume that $0\in \text{int}(\Theta).$

Let us assume that there exists  a loss function
$\theta\mapsto \mathfrak L_n(\theta)$ and $$\tilde\theta_n=(\tilde\theta_n^1,...,\tilde\theta_n^m)^\top\in\arg\min_\theta \mathfrak L_n(\theta).$$
Usually $\mathfrak L_n(\theta)$ is a (negative) log-likelihood function or the sum of squared residuals. Furthermore, suppose that $\tilde\theta_n$ admits a mixed-rates asymptotic behavior in the sense of \cite{rad}; that is for the asymptotic estimation of $\theta_0^i, i=1,...,m,$ is necessary to consider simultaneously different rates of convergence for $\tilde\theta_n^i, i=1,...,m$. More precisely, we deal with $r_n^i,i=1,...,m,$ representing sequences of positive numbers tending to 0 as $n\to\infty.$ ${\bf I}_m$ stands for the identity matrix of size $m$. Furthermore, we introduce the following matrices $$ A_n:= \text{diag}(r_n^1 {\bf I}_{p_1},...,r_n^m{\bf I}_{p_m}).$$
For the estimator $\tilde\theta_n$ the following assumptions hold true: 1) the estimator $\tilde \theta_n$ is consistent; i.e. 
\begin{equation}\label{eq:cond3}
A_n^{-1}(\tilde{\theta}_n-\theta_0)=\left(\frac{1}{r_n^{1}}(\tilde\theta_n^1- \theta_0^1),...,\frac{1}{r_n^{m}}(\tilde\theta_n^m- \theta_0^m)\right)^\top=O_p(1);
\end{equation}  $\tilde \theta_n$ is asymptotically normal; i.e. $$A_n^{-1}(\tilde{\theta}_n-\theta_0)=\left(\frac{1}{r_n^{1}}(\tilde\theta_n^1- \theta_0^1),...,\frac{1}{r_n^{m}}(\tilde\theta_n^m- \theta_0^m)\right)^\top\stackrel{d}{\longrightarrow} N_{\mathfrak p}(0,\mathfrak I),$$
		where $\mathfrak I:=\Gamma^{-1}$ and $\Gamma$ is a $\mathfrak{p}\times \mathfrak{p}$ positive definite symmetric matrix.

We assume that $\theta_0$ is sparse (i.e., some components of $\theta_0$ are exactly zero). Let $p_i^0:=|\{j:\theta_{0,j}^i\neq 0\}|, i=1,...,m,$ and $\mathfrak p^0:=\sum_{i=1}^mp_i^0.$ For the sake of simplicity, hereafter, we assume $\theta_{0,j}^i\neq 0, j=1,...,p_i^0,$ and it is equal to 0 otherwise, for any $ 
i = 1, ... , m.$ Therefore, our target is the sparse recovery of the model; i.e. we want to identify  the true model $\theta_0$ by exploiting a multidimensional random sample $(X_n)_{n}$ on some probability space. 
For this reason we use a penalized approach involving suitable shrinking terms. Since $\tilde \theta_n$ admits a mixed-rates structure, we penalize different sets of parameters with different norms. Therefore, the adaptive objective function with weighted $\ell^{q_i}$ penalties should be given by
\begin{align}\label{eq:stLASSO}
&\mathfrak L_n(\theta)+\left[\sum_{j=1}^{p_1}\lambda_{n,j}^1|\theta_j^1|^{q_1}+\sum_{j=1}^{p_2}\lambda_{n,j}^2|\theta_j^2|^{q_2}+...+\sum_{j=1}^{p_m}\lambda_{n,j}^m|\theta_j^m|^{q_m}\right]
\end{align}
where $q_i\in(0,1]$ and $(\lambda_{n,j}^i)_{n\geq 1}, j=1,...,p_i,i=1,...,m,$ are sequences of real positive random variable representing an adaptive amount of the shrinkage for  each element of $\theta^i.$

Hereafter,  the sequences of weights  $(\lambda_{n,j}^i)_{n\geq 1}$ will take the form
\begin{equation}\label{eq:lambda-n}
	\lambda_{n,j}^i = \lambda \, w_{n,j}^i \,, \,\, j=1,...,p_i,i=1,...,m
\end{equation}
where $ \lambda >0  $ is a constant not depending on $ n $  and $w_{n,j}^i>0$ (possibly random). Moreover for a vector $ z = (z_1, \ldots, z_p)^\top \in \mathbb R^p $ and  $ \mathbf w =  (w_1, \ldots, w_p)^\top \in \mathbb R_+^p,$  we denote the $ \mathbf w- $weighted $ \ell^q $ norm with
\begin{equation}\label{eq:w-lq-norm}
	\|\theta \|_{q,\mathbf w}^q := \sum_{j=1}^p w_j |\theta_j |^q,\quad  q>0.
\end{equation}

We follow two possible approaches. The first estimator is obtained by minimizing the cost function \eqref{eq:stLASSO}. In order to obtain a reasonable algorithm for the above problem, we must enforce some assumptions on $\mathfrak L_n.$

\begin{definition}  Let $\mathfrak L_n$ be a proper, coercive and $C^2$ function. We define the adaptive bridge-type estimator $\check{\theta}_n:\mathbb{R}^{(n+1)\times d}\to\overline \Theta$ as follows 
	
	\begin{equation}\label{eq:bridge-est2}
	\check{\theta}_n=(\check\theta_n^{1},...,\check\theta_n^{m})^\top\in\arg\min_{\theta\in\overline\Theta}\mathcal G_n(\theta; {\bf q})
	\end{equation}
where
\begin{equation}\label{eq: costfunction-f2}
\mathcal G_n(\theta; {\bf q}):=\mathfrak L_n(\theta) +  
\lambda \sum_{i=1}^m \|\theta^i\|_{q_i, \mathbf{w}_{n}^i}^{q_i}
\end{equation}
with $\mathbf{w}_{n}^i =  (w_{n,1}^i,w_{n,2}^i,...,w_{n,p_i}^i)^\top,$  $\mathbf q = (q_1, \ldots, q_m),$ and $q_i \in(0,1] \,, i=1,...,m.$ For $q_i=1,\,, i=1,...,m,$  \eqref{eq:bridge-est2} becomes a LASSO-type estimator.
\end{definition}
For $m=1,$ the estimator \eqref{eq:bridge-est2} has been studied in 
\cite{kinoshita2019penalized} with one penalty. The authors obtained the selection consistency and the convergence in law of \eqref{eq:bridge-est2} by means of a polynomial type large deviation inequality for  the statistical random field associated to $ \mathfrak L_n.$ We shed in light that the $\ell^{q_i}$-norms, with $q_i\in(0,1],$  allows non-differentiable  terms with some singularities. This choice is crucial to perform the selection of the true sub-model.

By setting 
\begin{equation}\label{eq:lsaf}
 \mathfrak L_n(\theta):=\frac 12 (\theta-\tilde{\theta}_n)^\top  G_n(\theta-\tilde{\theta}_n), 
\end{equation}
 where $ G_n$ be a $\mathfrak{p}\times \mathfrak{p}$   almost surely positive definite symmetric random matrix depending on $n$, $\check{\theta}_n$ becomes the least squares approximated estimator studied in \cite{wang1}, \cite{suz} and \cite{di-regularized}.  In this case, under suitable assumptions on $G_n$ and $\tilde\theta_n$, the regularized estimator \eqref{eq:bridge-est2} has the desirable oracle properties; that is $\check{\theta}_n$ is consistent, selects correctly the true sparse model and is asymptotically normal with reduced covariance matrix (see Theorem 1-3 in \cite{di-regularized}). For a discussion on the oracle features of a penalized estimator the reader can consult \cite{fanli2001}. 
 
 The loss function \eqref{eq:lsaf} is particularly useful in our framework, because allows to perform penalized estimation for diffusion processes. Therefore, hereafter $\hat\theta_n$ stands for the bridge estimator arising from problem \eqref{eq:bridge-est2} with loss function \eqref{eq:lsaf}.

Finally, we introduce the following notation.
\begin{itemize}
\item For the $ \mathfrak p \times \mathfrak p $ matrix $ G_n = (g_{k\ell}: k,\ell = 1, \ldots, \mathfrak p )$, with $ \mathfrak p = \sum_{h =1}^m p_h $, we write $ G_i $ to denote the $ i- $th block of rows of the matrix, i.e. the $ p_i \times \mathfrak p $ matrix $$ G_i :=\left (g_{k\ell}: k = \sum_{h =1}^{i-1} p_h +1 , \ldots, \sum_{h =1}^{i} p_h,\, \ell= 1\ldots, \mathfrak p \right),$$ for $ i = 1, \ldots, m $.

\item Let $ G_{ij} $ the $ j-$th row vector of the matrix $ G_i$, i.e. $$G_{ij} := \left(g_{k\ell}: k = \sum_{h =1}^{i-1} p_h +j, \,\ell = 1, \ldots \mathfrak p\right),$$ for $ i = 1, \ldots, m $, $ j = 1, \ldots, p_i $.

\item We use the notation $ \mathfrak G_i $ to denote the principal submatrix of $ G $ with columns and rows corresponding to the $ i-$th block, i.e. the  $ p_i \times p_i $ matrix
defined as $$\mathfrak G_i := \left(g_{k\ell}: k,\ell= \sum_{h=1}^{i-1} p_h +1 , \ldots, \sum_{h =1}^{i} p_h\right).$$

\item For a positive definite matrix $ G $ denote with $ \Lambda(G) $ its largest eigenvalue.

\item We denote with $ D_i $ the $ p_i \times p_i $ diagonal matrix with entries $g_{kk},$ i.e. $$D_i :=\text{diag}\left(g_{kk}:  k = \sum_{h =1}^{i-1} 
 p_h +1 , \ldots, \sum_{h =1}^{i} p_h \right). $$

\end{itemize}

\section{Pathwise optimization for bridge-type estimators}\label{sec:opt}

In practise, it is not easy to calculate the values of \eqref{eq:bridge-est2} for any value of $    \lambda$. This depends on the nonconvex and nonsmooth nature of our problem which falls into the class \eqref{eq:ncproblem}. By resorting to some tools arising from the nonconvex optimization theory, we introduce efficient algorithms for computing the adaptive bridge-type estimator \eqref{eq:bridge-est2}. In particular we focus our attention on the pathwise solution of the estimator. 

A linear approximation to the bridge penalty around an initial estimate $ \tilde{\theta} $ (see \cite{bridge-one-step}) has been considered in the literature. This approach regains convexity and results
	in a ``modified" soft-thresholding operator.  We suggest two procedures which don not require convex relaxation of the penalties; they are obtained by specializing the following two optimization algorithms:

\begin{itemize}
	\item Accelerated gradient descent approach for non-convex problems as in \cite{li2015accelerated}, which is a modification of the  algorithm introduced in \cite{5173518}.

	\item Block coordinate optimization for non-convex problems, see \cite{bolte2014proximal} (see also \cite{block-non-convex} or \cite{ipalm}). This respects the grouped structure of the original problem. 

\end{itemize}

	
	Hereafter, we assume that for any starting point $\theta^0\in\text{int}(\Theta)$ of the algorithm, the associated level set; i.e. 
$\{\theta\in \mathbb R^{\mathfrak p}: \mathcal{G}_n(\theta)\leq \mathcal{G}_n(\theta^0)\},$
is contained within $\text{int}(\Theta).$

\subsection{Accelerated proximal gradient algorithm}
A popular approach to penalized problems is to consider iterative proximal gradient algorithms (see Section \ref{sec:prox}). Accelerated algorithms rely on an interpolation between the current estimate and the previous steps. This ideas are exploited in the Fast Iterative Shrinkage-Thresholding Algorithm (FISTA) in \cite{nesterov1983method}, \cite{fista} and \cite{5173518} in the convex framework. FISTA has widely been applied for $ \ell^1 $ penalized estimation problems. Essentially, at each iteration the accelerated algorithm extrapolates a point by combining the current point and the previous point, then a proximal map function is iteratively applied till the convergence of the algorithm is reached.


Let us introduce the fast optimization method extending the FISTA algorithm to the nonconvex problem \eqref{eq:bridge-est2}. First, we observe that for the gradient descent approach with function $ h $  given by the weighted $ \ell^q  $ norm \eqref{eq:w-lq-norm}, the proximal map is equal to (see Section \ref{sec:prox})
\begin{equation}\label{eq:prox-l1}
\text{prox}_{s,\, \lambda||\cdot||_{q, \mathbf w}^q}(x)= \arg \min_{u} \left \{ \frac 1{2s} \|x - u\|^2 + \lambda \| u \|_{q, \mathbf w}^q \right \} = 
\mathbf{T}_{\lambda s \mathbf w}^{ q}(x)
\end{equation}
where $x\in\mathbb R^p,s>0, 0<q\leq 1$ and $\mathbf{T}_{\lambda \mathbf w}^{ q}$ is the component-wise adaptive hard-thresholding operator \eqref{eq:hard-q}.  
Let us introduce
\begin{equation}\label{eq:prox-q-s2}
\mathbf {\overline T}^{\mathbf q}_{\lambda s \mathbf w} (\theta) := \big (\mathbf T_{\lambda  s\mathbf w^i}^{q_i}( \theta^{i}
 -s\nabla_i \mathfrak L_n(\theta) ): i=1, \ldots, m \big ),\quad s>0, \theta\in\Theta,
\end{equation} 
where $ \nabla_i g$ stands for the ``partial" gradients obtained deriving $ g $ w.r.t. the components of $ \theta^i, i=1,2,...,m.$ For $q_i=1,i=1,2,...,m,$ the proximal map \eqref{eq:prox-q-s2} reduce to the soft-thresholding operator
\begin{equation}\label{eq:prox-q-ss}
\mathbf {\overline S}_{\lambda s \mathbf w} (\theta) := 
\big (\mathbf S_{\lambda s  \mathbf w^i}( \theta^i  -s\nabla_i \mathfrak L_n(\theta) ): i=1, \ldots, m \big ), \quad s>0,
\end{equation} 
where $\mathbf S_{\lambda s  \mathbf w^i}$ is given by \eqref{eq:soft-vec}.
  
For any fixed $ n \in \mathbb N$  and $ \lambda >0 $, we present an implementation of the \emph{monotone} Accelerated Proximal Gradient (APG) proposed in \cite{li2015accelerated} for problem \eqref{eq: costfunction-f2} involving nonconvex and nonsmooth multiple adaptive penalties. We denote by $\check \theta_n( \lambda)$ the bridge-type estimator. 
For the sake of simplicity, in what follows  we set $h:=h(\theta) := \lambda \sum_{i=1}^m \|\theta^i\|_{q_i,\mathbf w^i}^{q_i},$ $\check \theta^t := \check \theta^t( \lambda) $ and  drop the subscript $n.$ Furthermore, since $\mathfrak L_n$ is $C^2$ follows that it  has Lipschitz continuous gradient  with global Lipschitz constant $L(\mathfrak L_n)$ (see Appendix for the exact definition).

\begin{algorithm}[H]
	\caption{Pathwise accelerated proximal gradient descent algorithm }\label{alg:pathwise2bis}
	\begin{algorithmic}
		
		\State

		\begin{enumerate}[1.]
			\item Fix $ \lambda >0 $ and an initial values  $  {\eta}^{0}\in \text{int}(\Theta) $ (possibly depending on $ \lambda $).  Initialize $\check \theta^1=\check \theta^0 =  {\eta}^{0}, c_1=1$, $s<\frac1L, u<\frac1L,$ where $L$ is an upper bound of  $L (\mathfrak L):=L (\mathfrak L_n).$ 
			\item At step $ t= 1,2,...$, 
						 compute 
		\begin{equation}\label{eq:update-prox-eta2}
		\eta^{t} = \check \theta ^t +  \frac{c_{t-1}}{c_{t}}(\zeta^t - \check \theta^{t-1}) + \frac{c_{t-1}-1 }{c_{t}}(\check \theta^t - \check \theta^{t-1})
		\end{equation}
		\begin{equation}\label{eq:update-prox-zeta2}
		\zeta^{t+1} =\text{prox}_{s,h }( \eta^t - s \nabla \mathfrak L_n(\eta^t))= \mathbf {\overline T}^{\mathbf q}_{\lambda s \mathbf w} (\eta^t ),
		\end{equation}
		\begin{equation}\label{eq:update-prox-y2}
		\upsilon^{t+1} =\text{prox}_{u,h }( \check\theta^t - u \nabla  \mathfrak L_n(	\check\theta^t))= \mathbf {\overline T}^{\mathbf q}_{\lambda u\mathbf w} (\check \theta^t ),
		\end{equation}
		\begin{align}\label{eq:mom-weight-q2}
		c_{t+1} &= \frac{1 + \sqrt{1 + 4 c^2_t}}{2} \\
		\check \theta^{t+1} &=\begin{cases}
		\zeta^{t+1} & \text{if } \mathcal G(\zeta^{t+1}) \leq \mathcal G(\upsilon^{t+1})  \\
		\upsilon^{t+1} & \text{otherwise}
		\end{cases}
		\end{align}
					
			\item Repeat the previous steps over a grid of $ \lambda $ values to get the full path of the coefficient estimates.
		\end{enumerate}
	\end{algorithmic}
\end{algorithm}

From \eqref{eq:prox}, \eqref{eq:prox-lq}, \eqref{eq:prox-l1} and \eqref{eq:prox-q-s2}, it is not hard to verify \eqref{eq:update-prox-zeta2} and \eqref{eq:update-prox-y2} hold. Furthermore, Algorithm \ref{alg:pathwise2bis}   as shown in \cite{fista} and \cite{li2015accelerated} can achieve a $O(1/t^2) $ rate a of convergence for convex cost functions and the monotony in the nonconvex case, in contrast to the non-accelerated proximal gradient algorithm.  The crucial point is the monitor $\upsilon$ (see  \cite{li2015accelerated}), which allows to correct the accelerated term $\eta^{t+1}$ when it has the potential to fail; this  ``monitored acceleration" ensure that the accumulation point is a critical point and then it guarantees the convergence of the updates $\hat \theta^t, t=1,2,...$. We observe that the constant stepsizes appearing in Algorithm \ref{alg:pathwise2bis}, could be computed  by backtracking line search which allows to provide steps with different length at each iteration (see for more details \cite{li2015accelerated}).

It is an easy task to verify the assumptions required in Theorem 1  in \cite{li2015accelerated} for the convergence of  Algorithm \ref{alg:pathwise2bis}. Therefore, for each $ \lambda >0,  n\in\mathbb N $ and $ q_i \in (0,1], \, i=1,\ldots m $, the sequences of bounded estimates $ \{\check \theta^t_n: t = 1, 2, \ldots \} $ produced by  Algorithm \ref{alg:pathwise2bis} converge to a critical point $\check\theta^*_n$ of $\mathcal G_n(\theta; {\bf q});$ i.e. \begin{equation}\label{eq:convres2}
 \check\theta^*_n:=\lim_{t\to \infty}\check \theta^t_n, \quad \text{with}\, \,0\in \partial \mathcal G_n(\check\theta^*_n; {\bf q}).\end{equation}
 For other results on the convergence of the algorithm the reader can consult Theorem 3 in  \cite{li2015accelerated}.

 \begin{remark}
     If $\mathfrak L_n(\theta):=\frac 12 (\theta-\tilde{\theta}_n)^\top  G_n(\theta-\tilde{\theta}_n),$ the bridge-type estimator $\hat\theta_n$ reduces to the regularized estimator studied in \cite{suz} and \cite{di-regularized}. Clearly the proximity operator \eqref{eq:prox-q-s2} becomes $\mathbf {\overline T}^{\mathbf q}_{\lambda s \mathbf w} (\theta) := \big (\mathbf T_{\lambda  s\mathbf w^i}^{q_i}( \theta^{i}
 -sG_n(\theta-\tilde\theta_n)  ): i=1, \ldots, m \big ),\quad s>0, \theta\in\Theta.$ Furthermore, if $q_i=1,i=1,2,...,m,$ $\mathcal G_n$ becomes a convex cost function and the the LASSO-type estimator $\check\theta_n$ represents the global minimum of the problem. Furthermore, since $\nabla \mathfrak L_n(\theta)=G_n(\theta-\tilde{\theta}_n),$ the Lipschitz constant $L(\mathfrak L_n)$ becomes by the spectral norm $||G_n||=\Lambda(G_n)$ where $\Lambda(G_n)$ stands the largest eigenvalue of $G_n.$ Therefore, from Theorem 2 in   \cite{li2015accelerated}, we can conclude that for $q_i=1$ the  sequence $ \{\hat \theta^t_n, t = 1, 2, \ldots \} $ generated by  Algorithm \ref{alg:pathwise2bis} by applying the proximal map \eqref{eq:prox-q-ss}, approaches to the LASSO-type estimator $\hat \theta_n$ with an $O(1/t^2)$ convergence rate; i.e.
	\begin{equation}
	\mathcal G_n(\hat \theta_n^t)-\mathcal G_n(\hat \theta_n)\leq \frac{2\Lambda (G_n)||\hat \theta_n^0 -\hat \theta_n||^2}{\alpha(t+1)^2}, \quad \hat \theta_n^0\in\text{int}(\Theta).
	\end{equation}
 \end{remark}



\subsection{Blockwise proximal alternating minimization}

In order to take into account the multiple-penalties structure of our problem, we also consider a (non-accelerated) \emph{block} coordinate descent approach, i.e. we update one block of parameters at each step. The main point is to consider the approximations of our cost function by means of the standard proximal linearization (see \eqref{eq:prox-update}) of each subproblem and  alternating minimization on each block $\theta^i\in \Theta_i,i=1,...,m,$ by the proximal maps. This scheme is known in the literature as Proximal Alternating Linearized Minimization (PALM) and it has been introduce in \cite{bolte2014proximal}.

 For the sake of simplicity, in what follows  we set $\mathcal{G}(\theta):=\mathcal{G}_n(\theta;{\bf q}), \check \theta^t := \check \theta_n^t( \lambda) $ and  drop the subscript $n.$ Let us generate a sequence of estimates $\{\check \theta^t: t=1,2,...\}$ as follows. Let  $\mathfrak L$ be  a KL function (see Appendix for the definition) and $q_i\in(0,1]\cap \mathbb Q, i=1,2,...,m.$

\begin{algorithm}[H]
	\caption{Pathwise PALM algorithm }\label{alg:pathwise3}
		\begin{algorithmic}
		
		\State


		\begin{enumerate}[1.]
			\item Fix $ \lambda >0 $,  some initial values  $  {\check \theta^0(\lambda)}\in \text{int}(\Theta) $.
			
	\item[2.] At step $ t=1,2,...$, cycling  through the block components of $ \theta = (\theta^1, \ldots, \theta^m)^\top $
	(\emph{block proximal gradient}) update the $ i- $th block as  
	\begin{align}\label{eq:update-prox-i-grad}
	\check \theta^{i,t}&= \text{prox}_{\lambda s^i \|\cdot\|_{q_i, \mathbf{w}^i}^{q_i}}( \check\theta^{i,t-1}-s^i\nabla_i \mathfrak{L}(\check \theta^{1,t},...,\check \theta^{i-1,t},\check \theta^{i,t-1},...,\check \theta^{m,t-1})\\\notag
 &=\mathbf {T}^{q_i}_{\lambda s^i \mathbf w^i} \left(\check \theta^{i, t-1}  - s^i \nabla_i\mathfrak L\left(\check \theta^{1,t},...,\check \theta^{i-1,t},\check \theta^{i,t-1},...,\check \theta^{m,t-1} \right)\right)
	\end{align}
	with step size $ s^i = \alpha_i/L_i(\mathfrak L)$ with $0<\alpha_i<1$.
			
			\item[3.] Repeat the previous steps over a grid of $ \lambda $ values to get the full path of the coefficient estimates.
		\end{enumerate}
	\end{algorithmic}
\end{algorithm}	

As observed for Algorithm \ref{alg:pathwise2bis}, it is not hard to prove \eqref{eq:update-prox-i-grad}.
Note that in this case constant step sizes are used but, as noted in \cite{ipalm}, it is possible to implement backtracking rules also in the block coordinate case. 

 Algorithm \ref{alg:pathwise3} has two fine properties: 1)  for the updates  \eqref{eq:update-prox-i-grad}  the statement \eqref{eq:convres2} fulfills; i.e.  the bounded sequence $\{\check \theta^t: t=1,2,...\}$ is convergent to some accumulation/critical points; 2) if we chose a starting point quite close to the bridge-type estimate $\check\theta_n,$ then $\{\check\theta^t: t=1,2,...\}$ tends to $\check\theta_n$ (see Theorem 2.12 in \cite{Attouch2013ConvergenceOD} for the a detailed statement). 

The adaptive norms $\|\cdot\|_{q_i, \mathbf{w}^i}^{q_i}$ are semi-algebraic functions (see Appendix) and then $\mathcal G_n$ is a KL function. We observe that $\|\cdot\|_{q_i, \mathbf{w}^i}^{q_i}$ are bounded level sets and then the updates sequence $\{\check \theta^t: t=1,2,...\}$ is bounded. Since it is easy to check that $\mathcal G_n$ satisfies Assumption 1-2 in \cite{bolte2014proximal} (see also Remark 3 in the cited paper), the property 1) follows from Theorem 1 in \cite{bolte2014proximal}.    The property 2) is a consequence of Lemma 3-4 in \cite{bolte2014proximal}  and Theorem 2.12 in \cite{Attouch2013ConvergenceOD}.

\begin{remark}
Clearly, a fine example is given again by $\mathfrak L_n(\theta)=\frac 12 (\theta-\tilde{\theta}_n)^\top  G_n(\theta-\tilde{\theta}_n).$ In this case, the $i$-th block is updated with  \begin{align}\label{eq:udblsa}
	\hat \theta^{i,t} 
&=\mathbf {T}^{q_i}_{\lambda s^i \mathbf w^i} \left(\hat \theta^{i, t-1}  - s^i G_i\left((\hat \theta^{1,t},...,\hat \theta^{i-1,t},\hat \theta^{i,t-1},...,\hat \theta^{m,t-1} )^\top- \tilde \theta_n \right)\right).
	\end{align}
Moreover, $\frac 12 (\theta-\tilde{\theta}_n)^\top  G_n(\theta-\tilde{\theta}_n)$ is a  semi-algebraic function (see the discussion in Appendix) and then $\mathcal G_n$ is semi-algebraic as well. Therefore, the convergence results following Remark 6 in \cite{bolte2014proximal}, hold true.
\end{remark}

\section{Path consistency}\label{path} 

Define $ \lambda_{\max}$ as the smallest penalization value such that the null vector is a stationary point for \eqref{eq:stLASSO}. From result \eqref{eq:convres2}, we derive that a sufficient condition for this to happen is that the null-vector is a fixed point of the threshold updates $ \mathbf {\overline T}^{\mathbf q}_{\lambda s \mathbf w} $.
So we define 
\begin{equation}\label{eq:lambda-max-def}
    \lambda_{\max} :=\sup_s \inf\{\lambda >  0 : \mathbf {\overline T}^{\mathbf q}_{\lambda s \mathbf w}(0) = 0\},
\end{equation}
with $0\in\mathbb R^{\mathfrak p},$ where $ s $ is suitably bounded, e.g. $ s < 1/L(\mathfrak L_n) $.  

For the bridge-type estimator obtained by means of the least squares approximation, we replace $ \mathbf {\overline T}^{\mathbf q}_{\lambda s \mathbf w} $ with $ \mathbf {\widetilde T}^{\mathbf q}_{\lambda s \mathbf w}.$
	
\begin{remark}
	The sequence of $ \lambda $ values should start with the smallest $ \lambda_{\max} $ such that all the parameter as estimated 
	as zero and proceed backwards, using the previous estimate as starting point. This allows for greater efficiency in practice (see, e.g., Chapter 5, \cite{hastiebook2015}.)
\end{remark}

\begin{proposition}
	For Algorithm 1 , if $0< q_i <1, i=1,2,...,m$,
	\begin{equation}\label{eq:lambda-max-q}
		\lambda_{\max} =  \max_{\substack{1 \leq i \leq m \\ 1 \le j \leq p_i}} (w^i_{j})^{-1} (|\nabla_{ij}  \mathfrak L_n (0)|/c_{q_i})^{2-q_i}L(\mathfrak L_n)^{q_i-1}.
	\end{equation}

	For Algorithm 1-2 with LASSO-type penalty ($ q_i = 1$), we get 
	\begin{equation}\label{eq:lambda-max}
		\lambda_{\max} = \| W_n^{-1} \nabla \mathfrak L_n(0) \|_\infty,
	\end{equation}
where $ W_n $ is the diagonal matrix containing the weights $ \mathbf w_n=({\bf w}_n^1,...,{\bf w}_n^m)^\top $ on the main diagonal.
\end{proposition}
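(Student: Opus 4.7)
The approach is to unwind the definition \eqref{eq:lambda-max-def} of $\lambda_{\max}$ component-by-component, using the explicit form \eqref{eq:lq-threshold} of the scalar thresholding map. Since $\nabla \mathfrak L_n(0)$ is the only information about $\mathfrak L_n$ that enters at $\theta=0$, evaluating $\mathbf{\overline T}^{\mathbf q}_{\lambda s \mathbf w}$ at $0$ gives
\[
\mathbf{\overline T}^{\mathbf q}_{\lambda s \mathbf w}(0)
= \bigl(\mathbf T^{q_i}_{\lambda s \mathbf w^i}\bigl(-s\,\nabla_i \mathfrak L_n(0)\bigr)\bigr)_{i=1,\dots,m},
\]
so the fixed-point condition reduces to the $m\cdot p_i$ scalar conditions $T^{q_i}_{\lambda s w^i_j}(-s\,\nabla_{ij}\mathfrak L_n(0))=0$.

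For $0<q<1$, formula \eqref{eq:lq-threshold} says $T^q_\mu(z)=0$ iff $|z|\leq t_{q,\mu}$. The first step is to rewrite $t_{q,\mu}$ in a clean closed form. Substituting $\theta_{q,\mu}=[2\mu(1-q)]^{1/(2-q)}$ into $t_{q,\mu}=\theta_{q,\mu}+\mu q\,\theta_{q,\mu}^{q-1}$ and factoring out $[2\mu(1-q)]^{1/(2-q)}$ yields
\[
t_{q,\mu} \;=\; \frac{2-q}{2(1-q)}\,[2(1-q)]^{1/(2-q)}\,\mu^{1/(2-q)} \;=\; c_q\,\mu^{1/(2-q)},
\]
which identifies the constant $c_q$ appearing in the statement. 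Hence $T^{q_i}_{\lambda s w^i_j}(-s\,\nabla_{ij}\mathfrak L_n(0))=0$ is equivalent to $s|\nabla_{ij}\mathfrak L_n(0)|\leq c_{q_i}(\lambda s w^i_j)^{1/(2-q_i)}$, which, after solving for $\lambda$, becomes
\[
\lambda \;\geq\; \frac{s^{1-q_i}}{w^i_j}\,\bigl(|\nabla_{ij}\mathfrak L_n(0)|/c_{q_i}\bigr)^{2-q_i}.
\]

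Taking the infimum over $\lambda$ satisfying all such inequalities simultaneously gives the max over $(i,j)$ of the right-hand side. It remains to handle the outer $\sup_s$ in \eqref{eq:lambda-max-def} over $s<1/L(\mathfrak L_n)$. Since each $q_i<1$, the factor $s^{1-q_i}$ is strictly increasing in $s$, so the supremum is attained in the limit $s\to 1/L(\mathfrak L_n)$, producing $L(\mathfrak L_n)^{q_i-1}$ and giving \eqref{eq:lambda-max-q}. The LASSO case is simpler: using \eqref{eq:soft}, $S_\mu(z)=0$ iff $|z|\leq\mu$, so the scalar condition becomes $\lambda \geq |\nabla_{ij}\mathfrak L_n(0)|/w^i_j$, independent of $s$, and taking the max over $(i,j)$ recovers the $\ell^\infty$ expression \eqref{eq:lambda-max}.

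The only mildly subtle point is the algebraic simplification of $t_{q,\mu}$ to the clean form $c_q\mu^{1/(2-q)}$ and the monotonicity argument in $s$ for $0<q_i<1$; everything else is a direct substitution into the one-dimensional thresholding rule applied coordinate by coordinate. The reduction to scalar conditions is legitimate because $\mathbf{\overline T}^{\mathbf q}_{\lambda s \mathbf w}(0)$ decouples across coordinates, each coordinate depending only on the corresponding partial derivative of $\mathfrak L_n$ at $0$.
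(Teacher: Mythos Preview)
The proposal is correct and follows essentially the same approach as the paper: both rewrite $t_{q,\mu}$ as $c_q\,\mu^{1/(2-q)}$, reduce $\mathbf{\overline T}^{\mathbf q}_{\lambda s \mathbf w}(0)=0$ to the coordinatewise inequalities $\lambda \geq (w^i_j)^{-1}(|\nabla_{ij}\mathfrak L_n(0)|/c_{q_i})^{2-q_i}\,s^{1-q_i}$, and then take the supremum over $s<1/L(\mathfrak L_n)$. The only minor difference is that for the LASSO case the paper argues directly via the subdifferential condition $0\in\nabla_{ij}\mathfrak L_n(0)+\lambda w^i_j\,\partial|\cdot|(0)$, whereas you stay with the soft-thresholding characterization $S_\mu(z)=0\Leftrightarrow|z|\le\mu$; both are equivalent and yield the same $\|W_n^{-1}\nabla\mathfrak L_n(0)\|_\infty$.
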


\begin{proof}
	
	In the LASSO-type case it is possible to directly manipulate subdifferentials.	In fact, in order for $ \theta = 0 $ to be a solution (dropping the subscript $ n $), it is required that 
	\[
	0 \in \nabla_{ij}\mathfrak L(0) + \lambda w_j^i s_{ij} \Leftrightarrow 
	\frac 1{w_j^i}\nabla_{ij}\mathfrak L(0)\in [-\lambda, \lambda ] \quad  \forall i,j.
	\]
	
		For $ q_i<1 $ we can proceed as follows. Note that the threshold value in \eqref{eq:tq-thresh-values}
	can be rewritten as
	\begin{equation}
		\quad t_{q, \lambda} = \lambda^{\frac{1}{2-q}} c_q, \qquad c_q  = [2(1 -q)]^{\frac{1}{2-q}} \left( 1 + \frac{q}{2(1-q)}\right)
	\end{equation}
	In order for a solution obtained by updates of the form \eqref{eq:prox-q-s2} to remain null it suffices that
	\begin{align}\label{eq:tq-0}
		\mathbf {\overline T}^{\mathbf q}_{\lambda s \mathbf w}(0) = 0
		&\Leftrightarrow |s \nabla_{ij} \mathfrak L(0) | \leq c_{q_i} (w^i_{j} s \lambda) ^{\frac 1 {2-q_i}}, \notag
		\\ & \Leftrightarrow \lambda \geq \left(c_{q_i}^{-1} | \nabla_{ij} \mathfrak L(0) |\right)^{2-q_i} \frac{s^{1-q_i}}{w^i_{j}},
	\end{align}
	$\forall \, 1 \leq i \leq m, \,\, 1 \leq j \leq p_i.$
	Since $ s < 1/L(\mathfrak L)$, inequalities \eqref{eq:tq-0} are satisfied by \eqref{eq:lambda-max-q}.
	
\end{proof}	
	
\begin{remark}
		For block-wise type Algoritm \ref{alg:pathwise3}, one can replace the global Lipschitz constant with the minimum of the partial Lipschitz constants. Note that \eqref{eq:lambda-max} can be seen as a special case of \eqref{eq:lambda-max-q} 
	for the quadratic loss case, in the limit as $ q_i\to 1^-  $. 
\end{remark}

\begin{lemma}\label{lem:lambda-max}
	Under the assumptions 
	\begin{equation}\label{eq:cond1}
	D_n:=A_n G_nA_n\stackrel{p}{\longrightarrow} D,
	\end{equation}
	where $D$ is a $\mathfrak p\times \mathfrak p$ positive definite symmetric random matrix
	\begin{equation}\label{eq:cond2}
	a^i_n r^i_n = O_p(1),
	\end{equation}
with $ a^i_n = \max\{w^i_{j}, j \leq p^0_i\} ,i=1,...,m,$
	for the bridge-type estimator $\hat \theta_n,$ $ \lambda_{\max}^{(n)} \to \infty $ in probability, as the sample size $ n\to \infty $; i.e. $P( \lambda_{\max}^{(n)}>M)\to 1,$ for any $M>0.$
\end{lemma}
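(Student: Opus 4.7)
The plan is to exploit the quadratic structure of $\mathfrak L_n$. For the bridge estimator $\hat\theta_n$ one has $\nabla\mathfrak L_n(0) = -G_n\tilde\theta_n$, and by \eqref{eq:lambda-max-q}--\eqref{eq:lambda-max} it suffices to exhibit a pair $(i^*,j^*)$ belonging to the sparsity pattern $S:=\{(k,l):\theta^k_{0,l}\neq 0\}$ of $\theta_0$ along which the corresponding term in the max diverges in probability. Restricting to $(i^*,j^*)\in S$ is essential because \eqref{eq:cond2} controls the weights $w^i_j$ only for $j\leq p^0_i$.

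The first step is to rescale by $A_n$: using \eqref{eq:cond1}, the identity $A_n G_n = D_n A_n^{-1}$ gives
\[
A_n\nabla\mathfrak L_n(0) \;=\; -D_n A_n^{-1}\tilde\theta_n.
\]
Split $A_n^{-1}\tilde\theta_n = \tilde v_n + R_n$ with $\tilde v_n := A_n^{-1}\theta_0$ supported on $S$ with entries $\theta^k_{0,l}/r^k_n$, while $R_n := A_n^{-1}(\tilde\theta_n-\theta_0) = O_p(1)$ by the mixed-rates consistency \eqref{eq:cond3}. Letting $P$ be the orthogonal projection onto the $S$-coordinates and $D^0_n := PD_n P^\top \xrightarrow{p} D^0 := PDP^\top$, the fact that $\tilde v_n$ is supported on $S$ gives
\[
PA_n\nabla\mathfrak L_n(0) \;=\; -D^0_n\tilde v^0_n + O_p(1),
\]
with $\tilde v^0_n := P\tilde v_n$ and $D^0$ positive definite as a principal submatrix of the positive definite $D$.

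The main obstacle will be to show that some coordinate of $D^0_n\tilde v^0_n$ genuinely blows up rather than being annihilated by cancellation. I would handle this by selecting the fastest asymptotic rate attained on $S$: set $r^{\min}_n := \min\{r^k_n : (k,l)\in S\}$ and $S^{\min} := \{(k,l)\in S : r^k_n\asymp r^{\min}_n\}$. Then $r^{\min}_n\tilde v^0_n \to \eta^{0,*}$, a deterministic vector supported on $S^{\min}$ with nonvanishing entries $\theta^k_{0,l}$. The principal submatrix of $D^0$ on $S^{\min}$ is still positive definite, so $D^0\eta^{0,*}\neq 0$ and at least one of its nonzero coordinates lies inside $S^{\min}$; pick such a coordinate $(i^*,j^*)\in S^{\min}\subset S$. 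For this index one obtains
\[
r^{\min}_n\bigl(PA_n\nabla\mathfrak L_n(0)\bigr)_{i^*,j^*} \;\xrightarrow{p}\; -(D^0\eta^{0,*})_{i^*,j^*} \neq 0,
\]
equivalently $r^{i^*}_n|\nabla_{i^*j^*}\mathfrak L_n(0)|$ is of order $1/r^{\min}_n$ in probability.

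To close the argument, combine with \eqref{eq:cond2}: since $w^{i^*}_{j^*} \leq a^{i^*}_n$ and $a^{i^*}_n r^{i^*}_n = O_p(1)$,
\[
\frac{|\nabla_{i^*j^*}\mathfrak L_n(0)|}{w^{i^*}_{j^*}} \;\geq\; \frac{r^{i^*}_n|\nabla_{i^*j^*}\mathfrak L_n(0)|}{a^{i^*}_n r^{i^*}_n} \;\asymp\; \frac{1}{r^{\min}_n} \;\xrightarrow{p}\; \infty.
\]
For $q_i=1$, the representation \eqref{eq:lambda-max} gives $\lambda^{(n)}_{\max}\to\infty$ in probability at once. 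For $0<q_i<1$, inserting the same $(i^*,j^*)$ into \eqref{eq:lambda-max-q} and bounding $L(\mathfrak L_n) = \Lambda(G_n)\leq \Lambda(D_n)/(r^\dagger_n)^2$, with $r^\dagger_n := \min_k r^k_n$, a direct check of exponents shows that the right-hand side still diverges in probability under the natural regime $r^\dagger_n\asymp r^{\min}_n$ (i.e., every block contains at least one nonzero component).
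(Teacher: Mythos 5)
Your argument is correct and reaches the same conclusion by the same exponent count, but the route through the two delicate steps is genuinely different from the paper's. The paper works scalar by scalar: it lower-bounds each term $\frac{1}{w^i_j}|G_{ij}\tilde\theta|^{2-q_i}\|G\|^{q_i-1}$ by first passing from $\tilde\theta$ to $\theta_0$ via convexity of $x\mapsto x^{2-q_i}$ (so that the error is controlled by $A_n^{-1}(\tilde\theta-\theta_0)=O_p(1)$ and $r^i_n G_{ij}A_n=O_p(1)$), and then rescales $G_{ij}\theta_0$ by the rates to isolate a factor $U_n\,l_n^{-q_i}(r^i_n)^{q_i-1}$ with $U_n=O_p(1)$. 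You instead work vectorially, writing $A_n\nabla\mathfrak L_n(0)=-D_nA_n^{-1}\tilde\theta_n$ and absorbing $A_n^{-1}(\tilde\theta_n-\theta_0)$ as an additive $O_p(1)$ perturbation \emph{before} taking powers, which dispenses with the convexity inequality altogether. More importantly, you explicitly rule out cancellation: the paper's lower bound $U_n\,l_n^{-q_i}(r^i_n)^{q_i-1}\to\infty$ requires the quantity inside the absolute value in $U_n$ to be bounded away from zero, not merely $O_p(1)$, and the paper does not verify this (if, say, the fastest-rate block $\nu$ carries only zero true coefficients, the paper's leading term vanishes). Your selection of the fastest rate attained \emph{on the support} $S$, together with the observation that $(\eta^{0,*})^\top D^0\eta^{0,*}>0$ forces a nonzero coordinate of $D^0\eta^{0,*}$ inside $S^{\min}$, closes exactly this gap; it is the cleanest part of your write-up and something the paper's proof would benefit from. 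Two small caveats remain on your side: the definition of $S^{\min}$ via $\asymp$ presumes the relative ordering of the rates stabilizes (true in all the applications considered, and implicitly assumed by the paper when it fixes the index $\nu$), and your final regime $r^\dagger_n\asymp r^{\min}_n$ is an honest extra hypothesis that the paper also uses tacitly; neither is a genuine error.
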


We include here the superscript $ n $ to highlight the dependence of the maximal $ \lambda $ on the sample size.
\begin{proof}
In this case 
\[
\lambda_{\max}^{(n)} = \max_{\substack{1 \leq i \leq m \\ 1 \le j \leq p_i}} c_{q_i}^{q_i-2} \frac{|G_{ij} \tilde \theta |^{2-q_i}}{w^i_{j}} \|G\|^{q_i-1}
\]	
and note that $ G, \tilde \theta , \mathbf w$ depend on $ n $.
Let 
$
l_n = \min_{i} r^i_n. 
$
First note that, for every $ n $
\[
\|G\| = \max_{\|v\|=1} v^\top G v = \max_{\|v\|=1} (A_n^{-1}v)^\top D_n (A_n^{-1}v) \leq l_n^{-2} \|D_n\|
\]
and $ \|D_n\| = O_p(1) $ for the condition \eqref{eq:cond1}.

Denote with $ \ell = \ell(h,k) $ the index in $ \{1, \ldots , \mathfrak{p}\} $ corresponding to group $ h $, position $ k $. 
Also denote with $ \nu $ the index (depending on $ n $) of the smallest rate, i.e. $ r_n^\nu = l_n $. 

So, for $ j \leq p_0^i $, for any $ i $ and $ n $
\begin{align}\label{eq:lambda-conv1}
	&\frac{1}{w^i_j} |G_{ij} \theta^0|^{2 - q_i} \|G\|^{q_i -1} =
	\frac{1}{w^i_j} \left| \sum_{h=1}^m \sum_{k=1}^{p_h} (G_{ij})_{\ell(h,k)} \, \theta^0_{\ell(h,k)} \right|^{2-q_i} \|G\|^{q_i -1} 
	\\&=
	\frac{1}{w^i_j (r^i_n)^{2 - q_i}} \left| \sum_{h=1}^m \sum_{k=1}^{p_h} (D^n_{i,j})_\ell  \frac{ \theta^0_{\ell}}{r^{h}_n} \right|^{2-q_i} \|G\|^{q_i -1} 
	\notag \\
	&=
	\frac{1}{w^i_j (r^i_n)^{2 - q_i}}
	\left| 
	 \sum_{k'=1}^{p_\nu} (D^n_{i,j})_{\ell(\nu, k')}\,  \theta^0_{\ell(\nu, k')} +
	\sum_{h=1, h\neq \nu}^m \sum_{k=1}^{p_h} (D^n_{i,j})_{\ell(h, k)} \, \theta^0_{\ell(h, k)}\, \frac{l_n }{r^{h}_n} \right|^{2-q_i}
	\frac{ \|G\|^{q_i -1}}{l_n^{2-q_i}} 
	\notag \\ &\geq 
	 \frac{1}{a^i_n (r^i_n)^{2 - q_i}}
	 	\left| 
	 \sum_{k'=1}^{p_\nu} (D^n_{i,j})_{\ell(\nu, k')}\,  \theta^0_{\ell(\nu, k')} +
	\sum_{h=1, h\neq \nu}^m \sum_{k=1}^{p_h} (D^n_{i,j})_{\ell(h, k)} \, \theta^0_{\ell(h, k)}\, \frac{l_n }{r^{h}_n} \right|^{2-q_i}
	 l_n^{-q_i} \|D_n\|^{q_i -1}	 
	 \notag 
	 \\
	 &=: U_n l_n^{-q_i} (r^i_n)^{q_i -1} \to \infty,\quad n\to\infty,
	 \notag 
\end{align}
where $ a^i_n = \max\{w^i_{j}, j \leq p^0_i\} $. By assumptions \eqref{eq:cond1} and \eqref{eq:cond2}, the terms in the absolute value are $ O_p(1) $ as well as $ \|D_n\| $, since each $ (D^n_{i,j})_{\ell} $ converges in probability 
and $ 0 \leq l_n \leq  r^h_n $ (implying that the ratios $ l_n /r^h_n $ are bounded). 
Overall $ U_n = O_p(1) $ and, since $ r^i_n , l_n \to 0$, 
\eqref{eq:lambda-conv1} is unbounded.

Therefore
\begin{align}
	\lambda_{\max}^{(n)} = \max_{\substack{1 \leq i \leq m \\ 1 \le j \leq p_i}} c_{q_i}^{q_i-2} \frac{|G_{ij} \tilde \theta |^{2-q_i}}{w^i_{j}} \|G\|^{q_i-1} 
	\geq  \max_{\substack{1 \leq i \leq m \\ 1 \le j \leq p^0_i}} c_{q_i}^{q_i-2} \frac{|G_{ij} \tilde \theta |^{2-q_i}}{w^i_{j}} \|G\|^{q_i-1}
\end{align}
where the constants $ c_{q_i} $ do not depend on $ n $. For each term on the RHS, by convexity of $ x \mapsto x^{2-q_i} $
we can write
\begin{align}
	&\frac{1}{w^i_j} |G_{ij} \tilde \theta|^{2 - q_i} \|G\|^{q_i -1} = 
	\frac{1}{w^i_j} \left[ |G_{ij} \tilde \theta|^{2 - q_i} - |G_{ij} \theta_0|^{2 - q_i} +|G_{ij} \theta_0|^{2 - q_i} \right] \|G\|^{q_i -1}
	\\ &\geq 
	\frac{1}{w^i_j} \left[ (|G_{ij} \tilde \theta| - |G_{ij} \theta_0|) (2 - q_i) |G_{ij} \theta_0|^{1-q_i}
	 +|G_{ij} \theta_0|^{2 - q_i} \right] \|G\|^{q_i -1}
	 \notag \\ & \geq 
	 \frac{1}{w^i_j} \left[ -|G_{ij} (\tilde \theta -\theta_0)| (2 - q_i) |G_{ij} \theta_0|^{1-q_i}
	 +|G_{ij} \theta_0|^{2 - q_i} \right] \|G\|^{q_i -1}
	 \notag \\ &=
	 \frac{1}{w^i_j} \left[ -|r^i_n G_{ij} A_n A_n^{-1}(\tilde \theta -\theta_0)| (2 - q_i) \frac{1}{r^i_n|G_{ij} \theta_0|}
	 + 1 \right] |G_{ij} \theta_0|^{2 - q_i} \|G\|^{q_i -1} \to \infty 
	 \notag 
\end{align}
where the conclusion follows from \eqref{eq:lambda-conv1} and the fact that 
$ A_n^{-1}(\tilde \theta -\theta_0) = O_p(1) $, $ r^i_n G_{ij} A_n = O_p(1)$ element-wise, while 
\[
r^i_n|G_{ij} \theta_0| = \sum_{h,k} (D^n_{i,j})_\ell  \frac{ \theta^0_{\ell}}{r^{h}_n} \to \infty
\] 
making the terms into square brackets an $ O_p(1) $. The proof in the lasso can be performed by adapting the steps above.
\end{proof}

The fact that the solution path is identically null from some point on seems in contradiction with the consistency property of the estimators. The following theorem addresses the issue of path consistency. 

\begin{theorem}[Path consistency] 
	Under the assumptions \eqref{eq:cond3}, \eqref{eq:cond1} and \eqref{eq:cond2}, for the bridge-type estimator $\hat\theta_n$,
	as $ n\to \infty $, 
	the path estimates $ \{\hat \theta_n(\lambda): \lambda > 0 \} $
	produced either by
	\begin{enumerate}[(i)]
		\item any algorithm in the case with $ q_i = 1 \, \forall i; $ (LASSO-type)
		
		\item [] or
		
		\item the block-wise Algorithm \ref{alg:pathwise3}, 
		for $ 0 < q_i <1 $, 
		if the starting point is quite close to $\hat \theta_n$ (see Theorem 2.12 in \cite{Attouch2013ConvergenceOD});
	\end{enumerate}

	are pointwise consistent with respect to $\lambda$, i.e. 
	\[
	A_n^{-1}(\hat \theta_n(\lambda)- \theta_0) 1_{\{\lambda_{\max}^{(n)} > \lambda\}}=O_p(1), \quad n\to\infty, \forall \lambda > 0.
	\]
	Under additional technical  assumptions as in \cite{di-regularized}, the updates at point (i) and (ii) satisfy the oracle properties of variable selection and asymptotic normality.
\end{theorem}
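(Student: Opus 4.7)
The plan is to reduce the claim to the asymptotic identification of the algorithmic output $\hat\theta_n(\lambda)$ with the theoretical bridge-type estimator defined through problem \eqref{eq:bridge-est2}--\eqref{eq:lsaf}, and then to invoke the consistency of that estimator established in \cite{di-regularized}.

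First, for fixed $\lambda>0$, Lemma \ref{lem:lambda-max} gives $P(\lambda_{\max}^{(n)}>\lambda)\to 1$ as $n\to\infty$. On the complementary event the indicator forces the product $A_n^{-1}(\hat\theta_n(\lambda)-\theta_0)\,1_{\{\lambda_{\max}^{(n)}>\lambda\}}$ to vanish, so it suffices to show $A_n^{-1}(\hat\theta_n(\lambda)-\theta_0)=O_p(1)$ on the event $\{\lambda_{\max}^{(n)}>\lambda\}$, where the algorithmic output is a genuine non-trivial critical point of $\mathcal G_n$.

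Second, I would identify $\hat\theta_n(\lambda)$ with the theoretical minimizer $\hat\theta_n$ of \eqref{eq:bridge-est2} on this event. In case (i), since $\mathfrak L_n(\theta)=\frac 12 (\theta-\tilde\theta_n)^\top G_n(\theta-\tilde\theta_n)$ with $G_n$ a.s.\ positive definite and the $\ell^1$ penalty is convex, $\mathcal G_n$ is strictly convex; hence $\hat\theta_n$ is the unique global minimizer and both Algorithm \ref{alg:pathwise2bis} and Algorithm \ref{alg:pathwise3} converge to it from any initialization, with Algorithm \ref{alg:pathwise2bis} enjoying the quantitative $O(1/t^2)$ rate recalled after \eqref{eq:convres2}. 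In case (ii), $\mathcal G_n$ is semi-algebraic (because each weighted norm $\|\cdot\|_{q_i,\mathbf w^i}^{q_i}$ with $q_i\in(0,1]\cap\mathbb Q$ is semi-algebraic) and hence satisfies the Kurdyka--Łojasiewicz property; Theorem~2.12 in \cite{Attouch2013ConvergenceOD} then guarantees that, if Algorithm \ref{alg:pathwise3} is initialized inside a sufficiently small neighbourhood of $\hat\theta_n$, its iterates converge to $\hat\theta_n$ itself.

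Finally, I would import the consistency of $\hat\theta_n$: under \eqref{eq:cond3}, \eqref{eq:cond1}, and \eqref{eq:cond2} together with the standard weight conditions, Theorem~1 of \cite{di-regularized} yields $A_n^{-1}(\hat\theta_n-\theta_0)=O_p(1)$, which combined with the previous step produces the stated pointwise path consistency. The oracle properties alluded to in the final sentence then follow by quoting Theorems~2--3 of \cite{di-regularized} under their additional technical conditions. I expect the main obstacle to be the identification step in case (ii): outside the convex regime the KL machinery only yields convergence to \emph{some} critical point of $\mathcal G_n$, and the ``sufficiently close starting point'' hypothesis is essential to rule out spurious local minimizers at which the consistency result of \cite{di-regularized} does not a priori apply. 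A practical safeguard is the warm-start convention discussed in the preceding remark, sweeping $\lambda$ downward from $\lambda_{\max}^{(n)}$ and using previous path estimates as initializations.
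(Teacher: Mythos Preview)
Your proposal is correct and follows essentially the same approach as the paper: reduce to the event $\{\lambda_{\max}^{(n)}>\lambda\}$ via Lemma~\ref{lem:lambda-max}, identify the algorithmic output with the theoretical minimizer $\hat\theta_n$ (by strict convexity in case (i), by the KL/local-convergence argument of Theorem~2.12 in \cite{Attouch2013ConvergenceOD} in case (ii)), and then import the consistency and oracle results from \cite{di-regularized}. Your write-up is in fact more detailed than the paper's own proof, which sketches these same steps tersely.
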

\begin{proof}
	(i) In the LASSO case the stationary point coincides with the global minimum. For any fixed $\lambda >0$, by Lemma \ref{lem:lambda-max} and recalling that $\hat\theta_n$ satisfies the selection consistency property, it is not hard ti prove that for any $\varepsilon>0$
	$$P(A_{n,\varepsilon}):=P(\{ \lambda_{\max}^{(n)} > \lambda\}\cap \{|\hat \theta_n(\lambda)-\theta_0|\leq \varepsilon\})\longrightarrow 1,$$
	implying that the path is not identically null at such $ \lambda $. Then the consistency follows.
	
	(ii) In the bridge case the argument is analogous, where (local)
	convergence to a global minimum is guaranteed under the same conditions discussed for PALM Algorithm.

\end{proof}

\begin{remark}\label{rem:bridge-CD}
	If we consider each coordinate as a single block in a PALM-type algorithm, update \eqref{eq:udblsa}
	takes the form
	\begin{equation} \label{eq:update3}
		\hat \theta^{i,t}_j = \mathbf {T}^{q_i}_{\lambda s^i_j  w^i_j} \left(\hat \theta^{i, t-1}_j  - s^i_j G_{ij}\left((\hat \theta^{1,t},...,\hat \theta^{i,t}_{j-1},\hat \theta^{i,t-1}_j,...,\hat \theta^{m,t-1} )^\top- \tilde \theta \right)\right)
	\end{equation}
valid for step-sizes $ s^i_j < \alpha / g_{\ell \ell} $, $ g_{\ell \ell} $ being the Lipschitz constant of the one-dimensional block. We refer to this algorithm as the bridge version of the coordinate descent algorithm.
\end{remark}

\begin{figure}
	\begin{subfigure}{0.7\textwidth}
		\includegraphics[width=\textwidth]{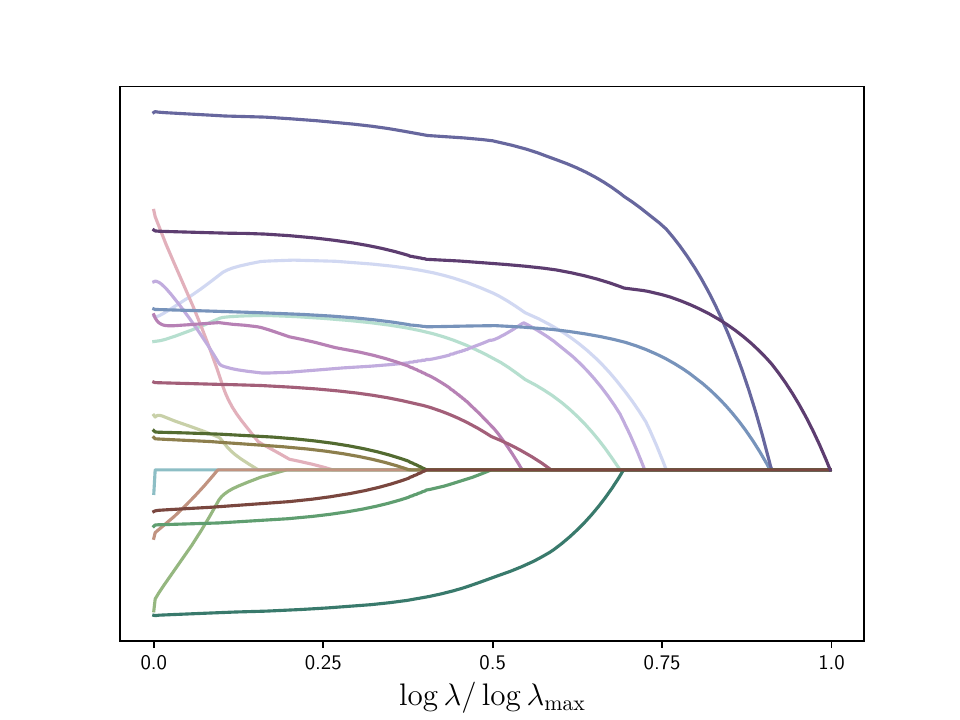}
		\caption{Sample path of the estimates for the LASSO estimator}
	\end{subfigure}
	\\
	\begin{subfigure}{0.7\textwidth}
		\includegraphics[width=\textwidth]{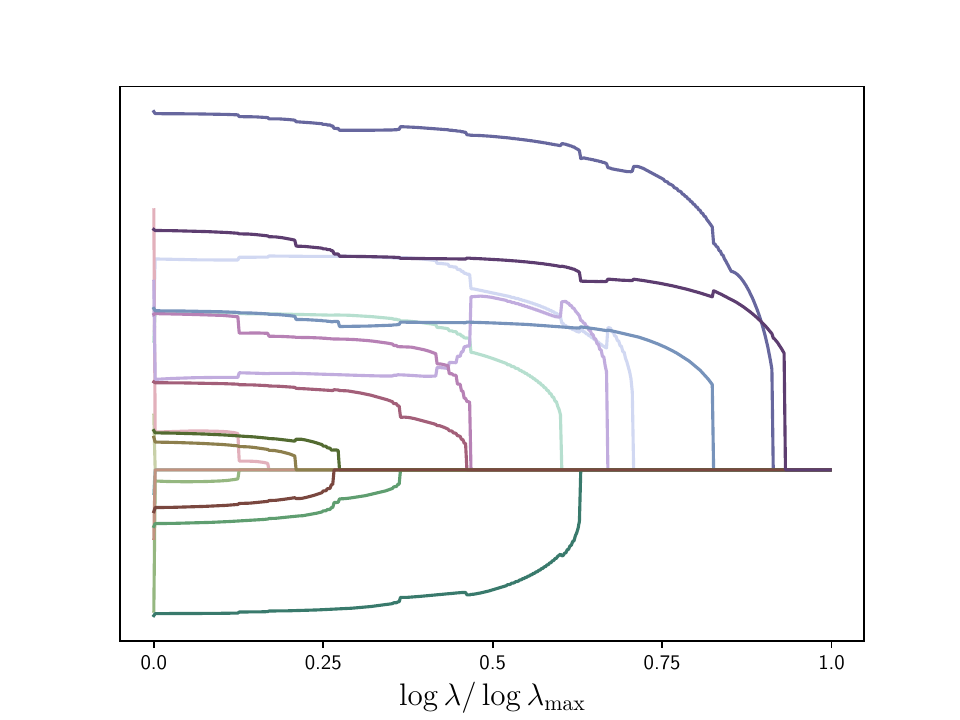}
		\caption{Sample path of the estimates of Algorithm 1}
	\end{subfigure}
	\caption{Comparison between LASSO and Bridge path. }
	\label{fig:paths}
\end{figure}

We close this section with a comparison between LASSO and Bridge-type solution paths. 
\autoref{fig:paths} shows typical sample paths outcome of Algorithm \ref{alg:pathwise2bis} and those arising in the LASSO case. The effect of the discontinuity of 
the $ T^q $ operators on the Bridge paths is apparent. The following sections discuss detailed comparisons of the techniques 
in terms of estimation performance.

\section{Applications to Generalized Linear Models}\label{sec:appglm}

\subsection{Penalized GLMs}
Estimator \eqref{eq:bridge-est2} can be applied to penalize likelihood functions arising from generalized linear models (GLMs). 
For instance, it is well known that for an exponential family of distributions with natural parameter $ \theta \in \Theta \subset \mathbb R^p$
\begin{equation}
	f(x; \theta) =  \exp\left\{\sum_{j=1}^{p} \theta_j T_j(x) - \psi(\theta) \right \} h(x)
\end{equation}
the natural parameter space $ \Theta $ is convex (\cite{lehmann2005testing}, Lemma 2.7.1) and, as an immediate consequence, the density is log-concave.
This means that the theoretical results regarding estimator \eqref{eq:bridge-est2} hold,  considering 
as the loss function $ \mathfrak L_n $ the negative log-likelihood 
\begin{equation}\label{eq:pen-lik}
	H_n(\theta; {q}):= \sum_{i=1}^{n} \ell_i(\theta)  + \lambda \|\theta\|_{q, \mathbf{w}_{n}}^{q},
\end{equation}
where $ \ell_i(\theta) = - \log f(x_i; \theta) $.
Note that in this case, we consider only one parameter group since there is a unique rate of convergence.

We would like to stress the fact that the problem of finding estimators based on \eqref{eq:pen-lik} has
been considered several times in the statistical literature (e.g. \cite{bridge-one-step}, \cite{huang2008bridge} and \cite{fu2000bridge} for regression problems). Nevertheless in all cases due to difficulties caused by the non-convexity of the problem,
the solution is computed in terms of some convex approximation. Our work shows that by exploiting non-convex analysis results the exact solution of the bridge estimator can be computed. Moreover the solution retains
the particular features of the estimator, which are variable selection combined with non-continuous estimation
path and penalization vanishing for large values of lambda. 

In particular in \cite{bridge-one-step} the one-step LLA estimator is introduced, derived as
\begin{equation}\label{eq:pen-lik-appr}
	\arg \min \left\{  \sum_{i=1}^{n} \ell_i(\theta)  + \sum_{j=1}^{p} p'_\lambda ( |\tilde \theta_j|) |\theta_j|\right\}
\end{equation}
where $ \tilde \theta $ denotes an initial (unpenalized) estimator for $ \theta $ and $ p'_\lambda $ 
is the derivative of a penalty function $ p_\lambda $. In the context of bridge estimation 
the penalty term in \eqref{eq:pen-lik-appr} becomes
\begin{equation}\label{eq:weigts-LLA}
	p'_\lambda ( |\tilde \theta_j|) |\theta_j| \propto \frac{1}{|\tilde \theta_j|^{1 - q}} |\theta_j|
\end{equation}
meaning that the proposed estimator is in fact equivalent to a weighted lasso estimator (e.g. compare with the choice of the weights \eqref{eq:lasso-weights-sde}). Note also that in \cite{bridge-one-step} a quadratic approximation of the log-likelihood is further considered, making the LLA one step estimator a particular 
case of the bridge-type estimator with weights given by \eqref{eq:weigts-LLA}.
In the case of regression problems with \emph{othonormal} designs, the "linear" approximation of the bridge penalty is capable of retaining the property of 
the vanishing penalization for large coefficients  (for an illustration see Figure 2 in \cite{bridge-one-step}). Specifically in this case, it is well known that the lasso solution is the soft-thresholding operator applied to the least squares estimator, incorporating the weights,  $ \hat \theta = S_{q\lambda/|\tilde \theta_j|^{1 - q}}(|\tilde \theta_j|) $. Note that the weights and the input are inversely related, causing the "vanishing penalization" effect.

However, for general quadratic approximation problems, it is not guaranteed that the penalization will vanish for large inputs, since the iterative updates of the form \eqref{eq:prox-q-ss}, for example, are based on weights which are now "fixed" with respect to the input of the operator.

\subsection{Simulation study}

Consider the linear regression model
\[
y = \mathbf X \theta_0 + \epsilon
\]

where $ y_i \in \mathbb R^n $, $ \mathbf X \in \mathbb R ^{n \times p}, \theta_0 \in \mathbb R^p $ and
$ \epsilon_i \sim  N(0, \sigma^2)$ are $ i.i.d.$ random variables.

The goal is to compare the predictive performance of the bridge estimator
\[
\hat \theta^q = \arg\min_{\theta}\left\{||y-{\bf X}\theta||_2^2+\lambda||\theta||_q^q\right\},
\]
its one step LLA approximation as in \eqref{eq:weigts-LLA}, denoted with $ \hat \theta^{q}_L
,$ and the 
classic lasso estimator $ \hat \theta^{1} $.

We consider a high-dimensional setting with $ p=500$.  
Roughly 2/3 of the components of $ \theta_0 $ are equal to zero (specifically 346), the others are positive or negative numbers in the range [-10, 10]. 
Following the same scheme as in \cite{zou2005regularization}, columns $ \mathbf{x}_i $ and $ \mathbf{x}_j $ of $ \mathbf X $ have correlation $ 0.5^{|i-j|} $. We set $ \sigma = 10 $.

The models are fitted by applying Algorithm \ref{alg:pathwise2bis} using $ n=1000 $ training points. The optimal $ \lambda $ value is chosen by cross-validation and the prediction error is measured on $ 1000 $ test points. 
LLA and LASSO solution were computed using existing R libraries, \texttt{grpreg} and \texttt{glmnet} respectively.

The results are summarized in \autoref{tab:regr-mse}, showing that Bridge estimator can achieve a better
performance in this context.
In \autoref{fig:regr-mse} we show the test error as a function of the normalized penalization parameter. 
In particular we see that Bridge can achieve a lower test error for suitable values of the tuning parameter, 
at the cost of a more rapid increase in bias as the penalization parameter increases. 

\begin{table}
	\begin{tabular}{cccc}
		\hline 
		& Bridge & LLA & LASSO \\
		\hline 
		Test $MSE$ & 25.85 & 26.01 & 26.9 \\
		\hline 
	\end{tabular}
	\caption{Test error comparison for bridge, one step LLA and LASSO estimators.}
	\label{tab:regr-mse}
\end{table}

\begin{figure}
	\begin{subfigure}{0.45\textwidth}
		\centering
		\includegraphics[width=0.99\textwidth]{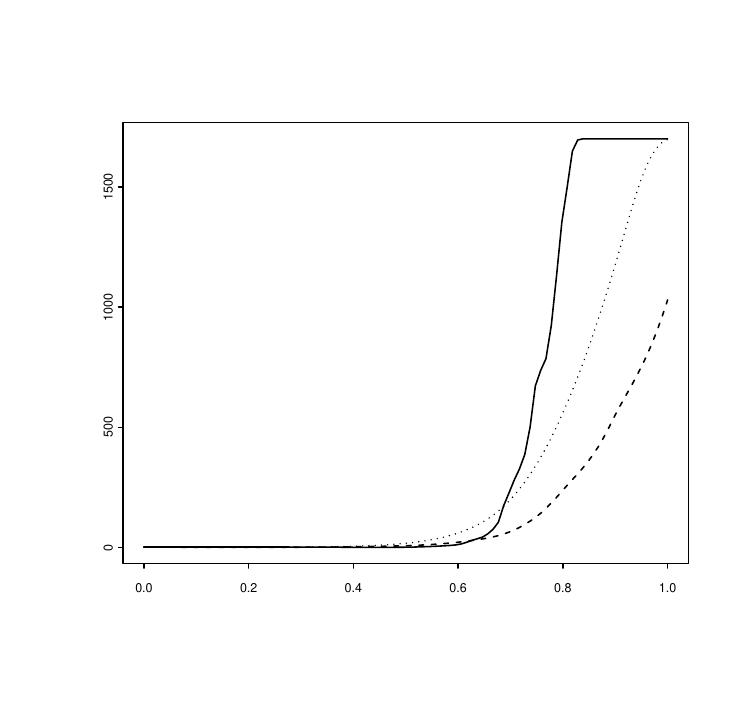}
		\caption{}
		\label{fig:regr-mse-test-full}
	\end{subfigure}
	\begin{subfigure}{0.45\textwidth}
		\centering
		\includegraphics[width=0.99\textwidth]{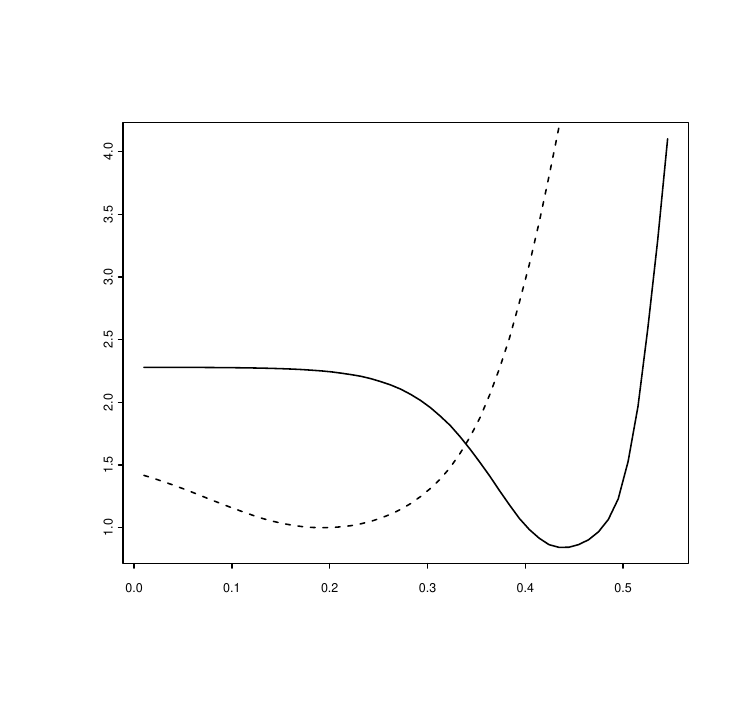}
		\caption{}
		\label{fig:regr-mse-test-red}
	\end{subfigure}%
	\caption{Comparison of test errors as a function of normalized penalization parameter $ \lambda / \lambda_{\max} $ for bridge (solid line), LLA (dashed), LASSO (dotted). 
 Figure B is a zoom of Figure A around the minimum of the curves.}
	\label{fig:regr-mse}
\end{figure}

\section{Application to Stochastic Differential Equations}\label{sec:appsde}

\subsection{Ergodic diffusions} 
Let  $(\Omega,\cor F,{\bf F}=(\cor F_t)_{t\geq 0},P)$ be a filtered complete probability space.
Let us consider a $d$-dimensional solution process $X:=(X_t)_{t\geq 0}$ to  the following stochastic differential equation (SDE)
\begin{equation}\label{sde}
	\de X_t=b(X_{t},\alpha)\de t+\sigma(X_{t},\beta)\de W_t,\quad X_0=x_0,
\end{equation}
where $x_0$ is a deterministic initial point, $b:\mathbb{R}^d\times  \Theta_{\alpha}\to \mathbb{R}^d$ and $\sigma:\mathbb{R}^d \times \Theta_\beta\to \mathbb{R}^d\otimes \re^r$ are Borel known functions (up to unknown parameter vectors $\alpha$ and $\beta$) and  $(W_t)_{t\geq 0}$ is a $r$-dimensional standard $\mathcal F_t$-Brownian motion. 
Let $\theta:=(\alpha,\beta)\in\Theta:=\Theta_\alpha\times \Theta_\beta \subset \mathbb R^{p_1 + p_2}$ and denote by $\theta_0:=(\alpha_0,\beta_0)$ the true value of $\theta$. 
As discussed in the Introduction, assume that $\theta_0$ has a sparse representation.

The process $X$ is sampled at $n+1$ equidistant discrete times $t_i^n$, such that $t_i^n-t_{i-1}^n=\Delta_n$ for $i=1,...,n,$ (with $t_0^n=0$). Therefore the data are observations ${\bf X}_n:=(X_{t_i^n})_{0\leq i\leq n}.$ 
We consider a \emph{high-frequency} sampling scheme, i.e with the following asymptotics as $ n\to\infty $: 
 $n\Delta_n\longrightarrow \infty$, $\Delta_n\longrightarrow 0$ and $n\Delta_n^p\longrightarrow 0$ for some $ p \geq 2 $, and there exists $\epsilon \in(0,(p-1)/p)$ such that $n^\epsilon\leq n\Delta_n$ for large $n$.
A widely used loss function in this context, for $ p=2 $, is  the negative quasi-log-likelihood function
 \begin{align}\label{qlik}
	\ell_n({\bf X}_n,\theta)
	&:=\frac12\sum_{i=1}^n\Bigg\{\log\text{det}(\Sigma( X_{t_{i-1}^n},\beta))
	\notag\\
	&\quad+\frac{1}{\Delta_n}(\Delta_i X-\Delta_n b( X_{t_{i-1}^n},\alpha))^\top \Sigma^{-1}( X_{t_{i-1}^n},\beta)(\Delta_i X-\Delta_n b( X_{t_{i-1}^n},\alpha))\Bigg\}.
\end{align}
with $\Sigma(x,\beta):=\sigma \sigma^\top (x,\beta)$ and $\Delta_i X:=X_{t_i^n}-X_{t_{i-1}^n} $, leading to the quasi-likelihood estimator
\[\tilde\theta_n = (\tilde\alpha_n,\tilde\beta_n) \in \arg\min_{\overline \Theta} \ell_n({\bf X}_n,\theta). \]

Under mild regularity conditions, for instance the functions $b$ and $\sigma$ are smooth, $\Sigma$ is invertible and $X$ is an ergodic diffusion, the quasi likelihood estimator has the following asymptotic properties (see e.g. \cite{kess} or \cite{yoshida2011polynomial}):

\begin{itemize}
	\item[(i)] $\tilde\alpha_n$ is $\sqrt{n\Delta_n}$-consistent while $\tilde\beta_n$ is $\sqrt{n}$-consistent; i.e. $(\sqrt{n\Delta_n}(\tilde\alpha_n-\alpha_0),\sqrt n(\tilde\beta_n-\beta_0))^\top=O_p(1);$
	\item[(ii)]   $\tilde\theta_n$ is asymptotically normal; i.e
	$$(\sqrt{n\Delta_n}(\tilde\alpha_n-\alpha_0),\sqrt n(\tilde\beta_n-\beta_0))^\top \stackrel{d}{\longrightarrow}N_{p_1+p_2}(0,\text{diag}((\Gamma^{11})^{-1},(\Gamma^{22})^{-1})),$$ 
\end{itemize}
	where $ \Gamma^{11} $ and $ \Gamma^{22} $ are the components of the Fisher information matrix of the diffusion.
%
%
%


From (i) and (ii) emerge that the estimator $\tilde\theta_n$ exhibits a \emph{mixed-rates} asymptotic regime with two different rates, $\sqrt{n\Delta_n}$ and $\sqrt n,$ for the two groups of parameters $\alpha$ and $\beta.$ 

The bridge-type estimator \eqref{eq:bridge-est2} can be implemented by setting $\mathfrak L_n(\theta):=\frac 12 (\theta-\tilde{\theta}_n)^\top  G_n(\theta-\tilde{\theta}_n),$ and $G_n = \ddot \ell_n({\bf X}_n,\theta)$ (the Hessian matrix). 
We consider adaptive weights of the form 
\begin{align}\label{eq:lasso-weights-sde}
	w_{n,j}^1 = \frac{w_{n,0}^1}{|\tilde \alpha_{n,j}|^{\delta_1}},
	\qquad 
	w_{n,k}^2 =  \frac{w_{n,0}^2}{|\tilde \beta_{n,k}|^{\delta_2}}
	\qquad j = 1, , \ldots, p_1, \,\,k = 1, \ldots, p_2,
\end{align}
where the exponents $ \delta_1 $ and $ \delta_2 $ are such that $ \delta_i > 1-q_i,1,2$ and
$ w_{n,0}^i $ have suitable asymptotics (constants will do for $ \delta_i > 2-q_i $).
In this setting the oracle properties of the regularized estimator \eqref{eq:bridge-est2} hold by setting $A_n= \text{diag}(1/(\sqrt{n\Delta_n}){\bf I}_{p_1},1/\sqrt n{\bf I}_{p_2})$. See \cite{di-regularized} for details.

It is worth mentioning that in literature appeared different estimators for ergodic diffusions satisfying the asymptotic properties (i) and (ii), for instance the quasi-Bayesian estimator or the hybrid multistep estimator (\cite{uchida2012adaptive}, \cite{uchida2014adaptive}, \cite{kamatani2015hybrid}).

\subsection{Simulation study.}

Consider a multivariate diffusion process $ X = (X^1, \ldots X^d)^\top= (X_t)_{ t\geq 0} \in \mathbb{R}^d $ driven by the SDE
\begin{equation}\label{eq:sim-sde-vec}
	\mathrm d X_t  = A X_t \mathrm d t + 
	B \, \mathrm d W_t
\end{equation}		
where $ A = (\alpha_{i,j}: \, i,j = 1, \ldots d)$ and $ B = (\beta_{i,j}:\, i,j = 1, \ldots d)  $
are parameter matrices assumed to be positive definite. In our case $ d = 4 $ and the true parameter matrices 
are 
\begin{equation}
	A = \begin{bmatrix}
		4 & -1.8&  0 & 0 \\
		0 & 4 & -1.8 & 0 \\
		0 & 0 & 4 & -1.8 \\
		0 & 0 & 0 & 4
	\end{bmatrix}
	\qquad 
	B = \begin{bmatrix}
		4 & 0&  0 & 0 \\
		0 & 4 & 0 & 0 \\
		0 & 0 & 4 & 0 \\
		0 & 0 & 0 & 4
	\end{bmatrix}
\end{equation}

\begin{figure}
	\includegraphics[width=\textwidth]{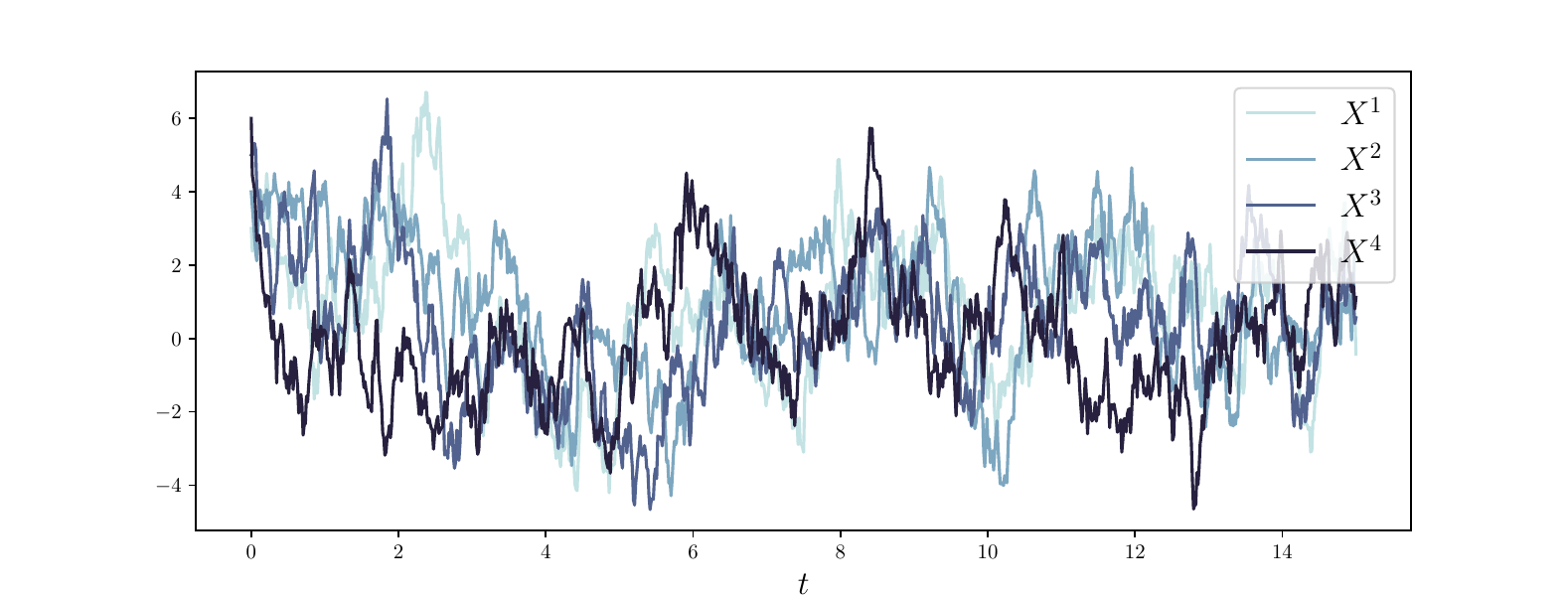}
	\caption{A simulated sample path of the solution to \eqref{eq:sim-sde-vec}}
\end{figure}

In this case the process components are not independent because of the dependence in the drift term. However
the correlation between the components can be ascribed to a specific ``causality" structure in the sense of Granger (\cite{di-regularized} and references therein for details): e.g. it is clear that $X^4$ is not caused by the other components whereas it causes $X^3$.

Therefore the goal of this study is to show how the estimator \eqref{eq:bridge-est2} with $\mathfrak L_n(\theta):=\frac 12 (\theta-\tilde{\theta}_n)^\top  G_n(\theta-\tilde{\theta}_n)$ and $G_n = \ddot \ell_n({\bf X}_n,\theta),$
is able to highlight the true dependency structure between the observed random signals
by starting with ``full" parameter matrices and by setting to zero the parameters
corresponding to non-existing relations.

In order for the model to be identifiable the matrix $B$ is restricted to be upper triangular, 
so the parameters $\beta_{i,j}, i<j$ are set to zero and are not to be estimated.
Moreover to enforce positive definiteness the diagonal elements $A$ and $B$ are restricted to be positive\footnote{For $A$ this is only a necessary condition. A further check is made during the optimization process to return a high objective function value if the condition is not met, and the optimization is repeated with a new random starting point until an admissible solution is found.}.
So the parameter space $\Theta \subset \mathbb R^{26}$ is the cartesian product of intervals either of the form $[-50, 50]$ or 
$[0, 50]$ for diagonal elements.

Discretely sampled data from model \eqref{eq:sim-sde-vec} are simulated from the true model. 
We consider the cases $n=1000$ and $n=10000$, 
corresponding to $\Delta_n = 0.015, T=15$, $\Delta_n = 0.003, T=30$ respectively, 
where $n$ denotes the length of the sampled series, $\Delta_n$ the time interval between subsequent observations
and $T$ is observation horizon.

The initial estimator $\tilde\theta_n:=(\tilde\alpha_n,\tilde\beta_n)$ is derived by optimizing \eqref{qlik}.
The penalized estimator \eqref{eq:bridge-est2} is obtained by setting
\begin{equation}
	w^1_{n,j} = \frac{1}{|\tilde \alpha_{n,j}|^{4}},
	\qquad 
	w^2_{n,k} =  \frac{1}{|\tilde \beta_{n,k}|^{4}}
	\qquad j = 1, , \ldots, p_1, \,\,k = 1, \ldots, p_2.
\end{equation}
and for the Bridge estimator we set $q_1 = q_2 = 1/2$.

We estimate the overall relative mean square error $ MSE_{\theta_0} = \mathbb E \|\hat \theta - \theta_0\|^2 / \|\theta_0\|^2 $ and the model selection probability $ P_0 $, i.e. the probability of estimating exactly as zero all of the zero parameters and only those. 
Since the oracle properties are asymptotic results for finite samples a small margin of error in model selection can be expected. Thus we report the estimates for {\it approximate} $ P_0 $, given by the probability of correctly identifying the true relations with at most one zero parameter not estimated as zero or vice versa. 
The results of the simulation study are shown in \autoref{tab:res}. The column corresponding to $ \tilde \lambda_{opt} $ shows the best performance achieved along the path. In all cases $  \tilde \lambda_{opt} \approx 0.5 $. We can draw the following conclusions:

\begin{itemize}
	\item Adaptive penalized techniques can improve convergence of the estimates in a mean square error sense for mild penalization 
	values with respect to the initial QML estimates. Larger penalization values, closer to $ \lambda_{\max} $, will cause the performance to deteriorate instead, since the estimates are heavily shrunk to zero.
	
	\item 
	Penalized techniques with $n=10000$ reach a high probability of correctly identify the true model. The best accuracy achieved is around 59 \% for both methods. If we allow for a small margin of error (at most one parameter not set to zero or vice versa),
	we obtain an approximate model selection probability higher than  96\%. Note that for smaller penalization 
	Bridge selection probability is much higher than the lasso.
	 Bridge selection shows a generally higher selection probability for almost all penalization values, as shown in \autoref{fig:sel-prob}, while at 
	the peak value where they tend to coincide, also for $ n=1000 $. This suggests that Bridge estimator is able to speed-up the convergence of the estimator for smaller sample sizes and is more robust to the choice of the tuning parameter. This is 
	useful in real-world applications where the tuning parameter has to be chosen using some validation technique: a sub-optimal choice of $ \lambda $ will impact less bridge estimates.
	
	\item \textit{Algorithms efficiency}. 
	\autoref{fig:comparison} shows a comparison of the average number of iteration required by Algorithms 1 and 3, 
	for both bridge and LASSO variants. The block variant is implemented by considering the drift and diffusion parameters as blocks, i.e. $ (\theta^1, \theta^2) = (\alpha, \beta)$. 
 We also compare these algorithms with the coordinate descent, which is based on univariate updates of the parameter vector (see \cite{tseng2001}), and is commonly used in statistical applications (\cite{hastiebook2015}, chapter 5).
 The coordinate descent for bridge is meant in the sense of block-wise algorithm with one block parameters, as in Remark \ref{rem:bridge-CD}. The fastest algorithm is the accelerated GD as could be expected. Interestingly enough
	the block-wise algorithm can achieve a similar performance, even if not accelerated. The results suggest that taking into account the block structure of the problem lead to an improvement in computation time by a factor of $ \tilde 3 $ with respect to coordinate-wise algorithms. 
\end{itemize}
\begin{table}
	\scriptsize 
	\renewcommand{\arraystretch}{1.2}
	\begin{tabular}{cc| ccccccc } 
		\hline 
		&& QMLE & \multicolumn{3}{c}{LASSO}  & \multicolumn{3}{c}{Bridge}  \\
		&& 	 & $ \tilde \lambda =0.25$ &  $ \tilde \lambda =\tilde \lambda_{opt}$  &  $ \tilde \lambda =0.7$  &  $ \tilde \lambda =0.25$   &  $ \tilde \lambda =\tilde \lambda_{opt}$   &  $ \tilde \lambda =0.7$ 	\\
		\hline 
		\multirow{2}{*}{$MSE_{\theta_0}$} &
		$n=1000$ & 0.072  &  0.058  &  0.052  &  0.085  &   0.058  &   0.051   &  0.074 \\
		&
		$n=10000$ & 0.035 &   0.023  &  0.029  &  0.093   &  0.023  &   0.027  &   0.075 \\
		\hline 
		\multirow{2}{*}{$P_0$} &
		$n=1000$ & - & 0.025 & 0.271    & 0.033    &  0.065 & 0.278    &     0.034    \\
		&
		$n=10000$& - & 0.168   &  0.590 &   0.019   &  0.323    & 0.598   &  0.021 \\
		\hline 
		\multirow{2}{*}{$ appr. \, P_0$} &
		$n=1000$& - &   0.145 & 0.765    & 0.237    & 0.294  &   0.775  &    0.249    \\
		&
		$n=10000$ & - & 0.519   &  0.962 &   0.183  &  0.730   &  0.963    & 0.199  \\
		\hline 
	\end{tabular}
	\caption{Numerical results.}
	\label{tab:res}
\end{table}

\begin{figure}
	\includegraphics[width=0.6\textwidth]{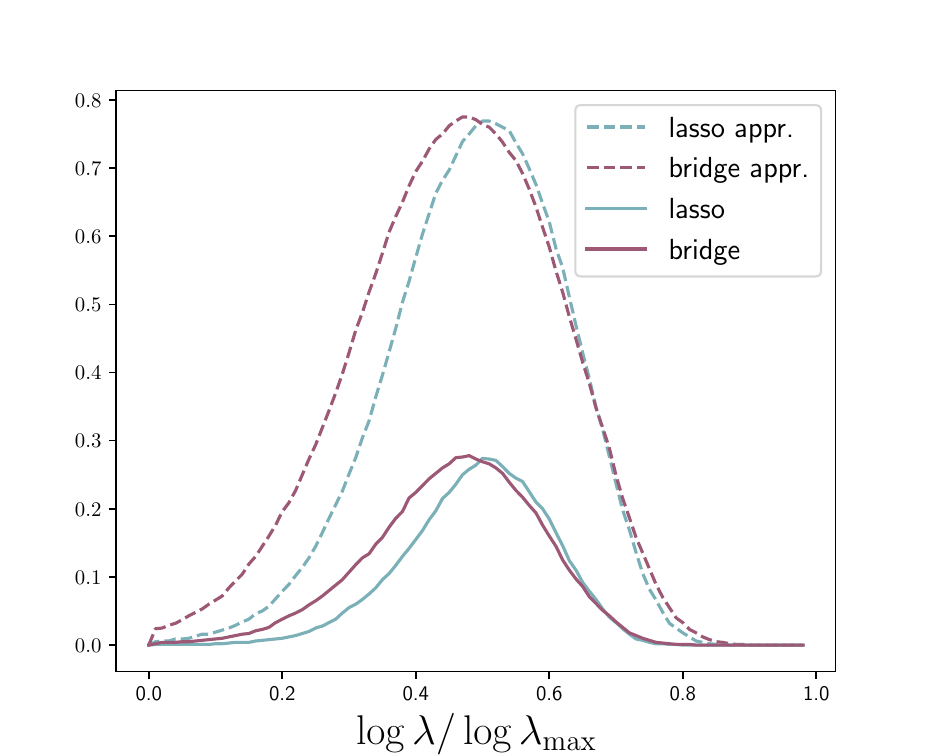}
	\caption{Model selection proportions for $n=10^3$ as a function of the
		normalized penalization parameter.}
	\label{fig:sel-prob}
\end{figure}

\begin{figure}

	\includegraphics[width=.7\textwidth]{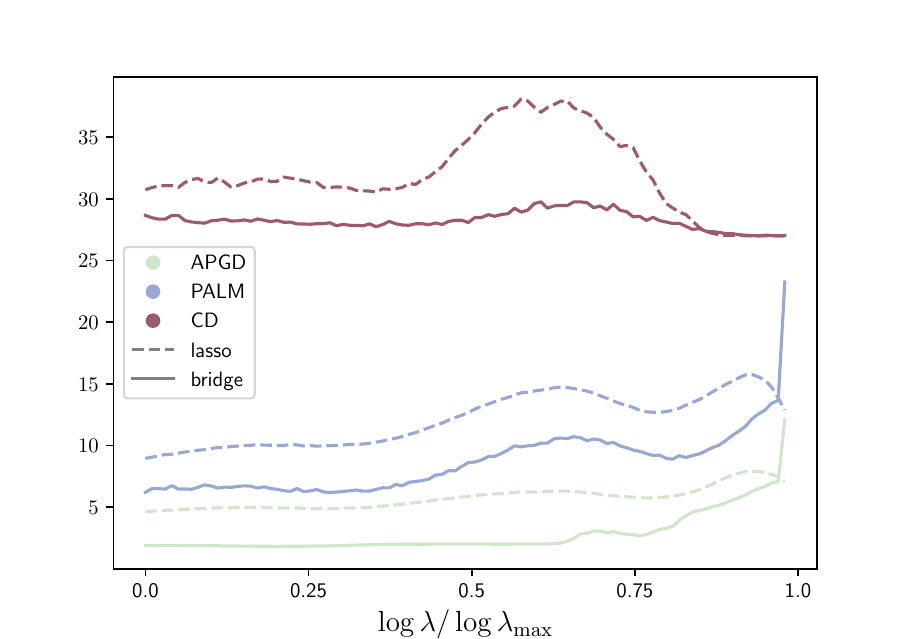}
	\caption{Comparison between algorithms \emph{APGD} (Alg.1), \emph{PALM} (Alg. 3),  \emph{CD} (Coordinate Descent) for lasso (dashed) and bridge (solid line): number of iterations as a function of the normalized penalization parameter. In CD in the bridge case is meant to be (Alg.3) with one-parameter blocks ($ n = 10^4 $).}
	\label{fig:comparison}
\end{figure}

\section*{Appendix}
Let us recall few definitions concerning subdifferential calculus and variational analysis for nonconvex and nonsmooth functions (see, e.g., \cite{RockWets09}).
A function $f: \mathbb R^n \to (-\infty, \infty]$ is said to be proper if $\text{dom}\, f\neq \emptyset$, where $\text{dom}\, f = \{ x\in \mathbb R^n : f(x) <+\infty\}$ . Furthermore, $h$ is lower semicontinuous at point $x_0$ if $\lim\inf_{x\to x_0} f(x) \geq f(x_0)$.  Furthermore  $f$ is coercive if it is bounded from below and $f(x)\to \infty$ when $||x|| \to \infty$, where $||\cdot||$ stands for the euclidean norm.

A function $f:\mathbb R^n\to \mathbb R$ is of type $C^{1,1}$; i.e. continuously differentiable  with Lipschitz continuous gradient, if
$$||\nabla f(x)-\nabla f(y)||\leq L(f)||x-y||,\quad \forall x,y\in \mathbb R^n,$$
where  $L(f)>0$ is the global Lipschitz constant of $\nabla f.$

Now, we introduce the subdifferential for a function. This mathematical tool plays a crucial role in our analysis.
\begin{definition}[Subdifferential]
Let $f: \mathbb R^n \to (-\infty, \infty]$ be a proper and lower semicontinuous function.

(i) The Frechét subdifferential of $f$ at $x\in$dom\,$f$, written $\widehat \partial f(x),$ is defined as follows
$$\widehat \partial f(x):=\left\{u\in \mathbb R^n:	 \lim_{y\neq x}\inf_{y	\to x}\frac{f(y)-f(x)-u^\top (y-x)}{||y-x||}\geq 0\right\}.$$
When $x\notin$dom\,$f$, we set $\widehat \partial f(x)=\emptyset.$

(ii) The limiting-subdifferential, or simply the subdifferential, of $f$  at $x\in\mathbb R^n,$ written $\partial f(x),$ is defined as follows
$$\partial f(x):=\{u\in \mathbb R^n: \exists \, x^k\to x,\, f(x^k)\to f(x),\, \text{and} \,\, \widehat \partial f(x^k)\ni u^k\to u\, \text{as}\, k\to 	\infty\}.$$

(iii) $x\in \mathbb R^n$ is called critical point if $0\in \partial f(x).$
\end{definition}

The well known Fermat’s rule remains unchanged; that is if $x\in \mathbb R^n$ is a local minimizer of $f$ then $0\in \partial f(x)$ (see Theorem 10.1 in  \cite{RockWets09}). 

If $f:\mathbb R^n \to(-\infty,\infty]$ is a proper and convex function, as particular case we obtain the definition of subdifferential for a convex function (see \cite{bertsekas2009convex} and Proposition 8.12 in  \cite{RockWets09}); i.e. for any $x\in$ dom $f$
$$\partial f(x)=\{u\in\mathbb R^n: f(y)\geq f(x)+ u^\top (y-x), \forall y\}=\widehat\partial f(x).$$
When $f$ is also differential at $x,$ $u\in\partial f(x)$ coincides with $\nabla f(x).$ 

Now, we recall the essentials issues about the  Kurdyka–Łojasiewicz (KL) property (see \cite{attouch2010} for more details) playing a crucial role in the nonconvex optimization theory. These property guarantees the global convergence of the whole sequence generated by a nonconvex algorithms to a critical point (see, e.g, \cite{attouch2010}, \cite{bolte2014proximal} and \cite{li2015accelerated}). 

Let $\Phi_\eta,	\eta\in(0,+\infty],$ be the class of concave and continuous functions $\varphi:[0,\eta)\to \mathbb R_+$ such that the following conditions fulfill: 1) $\varphi(0)=0;$ 2) $\varphi \in C^1((0,\eta))$ and $\varphi$ is continuous in 0; 3) $\varphi'(s)>0$ for all $s\in(0,\eta).$ 

\begin{definition}[Kurdyka–Łojasiewicz property; see  \cite{attouch2010}] 

Let  $f: \mathbb R^n \to (-\infty, \infty]$ be a proper lower semicontinuous function. Then $f$ is said to have  the  KL property if for any $\bar u\in$dom $\partial f:=\{u\in \mathbb R^n: \partial f(u)\neq \emptyset\}$ there exists $\eta\in(0,+\infty],$ a neighborhood $U$ of $\bar u$ and a function $\varphi\in\Phi_\eta$ such that for all $u\in U\cap \{u\in\mathbb R^n: f(\bar u)<f(u)<f(\bar u+\eta)\}$ the following inequality holds
$$\varphi'(f(u)-f(\bar u))\text{dist}(0,\partial f(u))\geq 1.$$
\end{definition}

The KL property is not so restrictive. Several functions appearing in the optimization problems, such as  semi-algebraic functions are KL functions (see \cite{attouch2010} and \cite{bolte2014proximal} for a formal definition of semi-algebraic function and its property).
  Typical semi-algebraic functions include real polynomial functions,  finite sums/product and compositions of semi-algebraic functions, $||x||_0,$ $||x||_p,$ with $p> 0$ and rational.

	\bibliographystyle{abbrv}
	\bibliography{biblio}

\end{document}